\pgfplotsset{compat=1.11}
\DeclareMathOperator{\polylog}{polylog}
\DeclareMathOperator{\Proj}{Proj}
\DeclareMathOperator{\Range}{Range}
\title{Size-Noise Tradeoffs in Generative Networks}
\author{Bolton Bailey\qquad Matus Telgarsky\\
\tt{\{boltonb2,mjt\}@illinois.edu}\\
University of Illinois, Urbana-Champaign}
\date{}
\begin{document} 

\maketitle

\begin{abstract}
  This paper investigates the ability of generative networks to convert their input noise distributions into other distributions.
  Firstly, we demonstrate a construction that allows ReLU networks to increase the dimensionality of their noise distribution by implementing a ``space-filling'' function based on iterated tent maps. We show this construction is optimal by analyzing the number of affine pieces in functions computed by multivariate ReLU networks. 
  Secondly, we provide efficient ways (using $\polylog(1/\epsilon)$ nodes) for networks to pass between univariate uniform and normal distributions, using a Taylor series approximation and a binary search gadget for computing function inverses. Lastly, we indicate how high dimensional distributions can be efficiently transformed into low dimensional distributions.
\end{abstract}

\section{Introduction}

This paper focuses on the representational capabilities of generative networks. 
A generative network models a complex target distribution by taking samples from some efficiently-sampleable noise distribution and mapping them to the target distribution using a neural network.
What distributions can a generative net approximate, and how well?
Larger neural networks or networks with more noise given as input have greater power to model distributions, but it is unclear how the use of one resource can make up for the lack of the other.
We seek to describe the relationship between these resources. 

In our analysis, we make a few assumptions on the structure of the network and the noise.
We focus on the two most standard choices for noise distributions: The normal distribution, and the uniform distribution on the unit hypercube \citep{arjovsky2017wasserstein}. Henceforth, we will use the term ``uniform distribution'' to refer to the uniform distribution on unit hypercubes, unless otherwise specified. We look specifically at the case where the generative network is a fully-connected network with the ReLU activation function (without weight sharing). The notion of approximation we use is the \emph{Wasserstein distance}, introduced for generative networks in \citet{arjovsky2017wasserstein}, which is defined as follows:
\begin{definition}
    For two distributions $\mu$ and $\nu$ on $\R^d$, their Wasserstein distance is defined as
    \[ 
        W(\mu, \nu) := \inf_{\pi \in \Pi(\mu, \nu)} \int |x-y| d\pi(x, y),  
    \]
    where $\Pi(\mu, \nu)$ is the set of joint distributions having $\mu$ and $\nu$ as marginals.
\end{definition}

Our results fall into three regimes, each covered in its own section: 
\begin{description}
    \item[Section \ref{nld}: The case where the input dimension is less than the output dimension.] 
    \hfill \\    
    In this regime, we prove tight upper and lower bounds for the task of approximating higher dimensional uniform distributions with lower dimensional distributions in terms of the average width $W$ and depth (number of layers) $L$ of the network.
    The bounds are tight in the sense that both give an accuracy of the form $\eps = O(W)^{-O(L)}$ (keeping input and output dimensions fixed). 
    Thus, this gives a good idea of the asymptotic behavior in this regime: Error exponentially decays with the number of layers, and polynomially decays with the number of nodes in the network.

    \item[Section \ref{ned}: The case where the input and output dimensions are equal.] 
    \hfill \\
    In this regime, we give constructions for networks that can translate between single dimensional uniform and normal distributions. These constructions incur $\eps$ error in Wasserstein distance using only $\polylog(1/\eps)$ nodes.

    \item[Section \ref{ngd}: The case where the input dimension is greater than the output dimension.]
    \hfill \\
    In this regime, we show that even with trivial networks, increased input dimension can sometimes improve accuracy.

\end{description}

In the course of proving the above results, we show several lemmas of independent interest.
\begin{description}
    \item[Multivariable affine complexity lemma.]
    \hfill \\
    For a function $f:\R^{n_0} \to \R^{d}$ computed by a neural network with $N$ nodes and $L$ layers and ReLU nonlinearities, the domain of $f$ can be partitioned into $O \left(\frac{N}{n_0L} \right)^{n_0 L}$ convex (polyhedral) pieces such that $f$ is affine on each piece. This is extends prior work, which considered networks with only univariate input \citep{telgarsky2016benefits}.

    \item[Taylor series approximation.]
    \hfill \\
    Univariate functions with quickly decaying Taylor series, such as $\exp, \cos$, and the CDF of the standard normal, can be approximated on domains of length $M$ with networks of size $\poly(M, \ln(1/\eps))$.
    This idea was been explored before by \citet{yarotsky2017error}; the key difference between this work and the prior is that our results apply directly to arbitrary domains.

    \item[Function inversion through binary search.]
    \hfill \\
    The inverses of increasing functions with large enough slope can be approximated efficiently, provided that the functions themselves can be approximated efficiently. While this technique does not provide uniform bounds on the error, we show that it provides approximations that are good enough for generative networks to have low error.

\end{description}

Detailed proofs of most theorems and lemmas can be found in the appendix.

\subsection{Related Work} 

Generative networks have become popular in the form of Generative Adversarial Networks (GANs), introduced by \citet{goodfellow2014generative}; see for instance
\citep{creswell2018generative} for a survey of various GAN architectures.
GANs are trained using a discriminator network, an auxiliary neural network which tries to prove the distance from the simulated distribution to the true data distribution is large.
The generator is trained by gradient descent to minimize the distance given by the adversary network. 
Wasserstein GANs (or WGANs) are GANs which use an approximation of the Wasserstein distance as this notion of distance. The concept of Wasserstein distance comes out of the theory of optimal transport, as discussed in \citet{villani2003topics}, and its use as a performance metric is expounded in \citet{arjovsky2017wasserstein}. WGANs have shown success in various generation tasks \citep{osokin2017gans, donahue2018synthesizing, chen2017face}. While this paper uses the Wasserstein distance as a performance metric, we are not concerned with the training process, only the representational capabilities of the networks.

Many of the results in this paper build out of the results on the representational power of neural nets as function approximators.
These results first focused upon approximating continuous functions with a single hidden layer
\citep{nn_stone_weierstrass,cybenko}, but recently branched out to deeper networks
\citep{telgarsky2016benefits,ohad_nn_apx,yarotsky2017error, montufar2014number}.
A concurrent work in this area is \citet{zhang2018tropical}, which uses tropical geometry to analyze deep networks. This work produced a result on the number of affine pieces of deep networks \citep[Theorem 6.3]{zhang2018tropical}, which matches our bound in \autoref{MultidimAffinePiece}. This bound was originally suggested in \citet{montufar2014number}.
The present work relies upon some of these recent works (e.g., affine piece counting bounds,
approximation via Taylor series), but develops nontrivial extensions (e.g., multivariate
inputs and outputs with tight dimension dependence, less benign Taylor series).

The representational capabilities of generative networks have previously been studied by \citet{lee2017ability}. That paper provides a result for the representation capabilities of deep neural networks in terms of ``Barron functions'', first described in \citet{barron}, which are functions with certain constraints on their Fourier transform. \citet{lee2017ability} showed that compositions of these Barron functions could be approximated well by deep neural networks. Their main result with respect to the representation of distributions was that the result of mapping a noise distribution through a Barron function composition could be approximated in Wasserstein distance by mapping the same noise distribution through the neural network approximation to the Barron function composition. These techniques do not readily permit the analysis of target distributions which are not images of the input space under these Barron functions.

The Box-Muller transform \citep{box1958note} is a computational method for simulating bivariate normal distributions using uniform distributions on the unit (2-dimensional) square. The method is a general algorithm, but it is possible to simulate the transform with specially constructed neural nets, to prove theorems similar to those in \autoref{ned}. In fact this was our original approach; an overview of the Box-Muller implementation can be found in \autoref{ned}.

\subsection{Notation for Neural Networks}

We define a neural network with $L$ layers and $n_i$ nodes in the $i$th layer as a composition of functions of the form
\[ 
    A_L \circ \sigma_{n_{L-1}} \circ A_{L-1} \circ \sigma_{n_{L-2}} \circ \cdots 
    \circ \sigma_{n_1} \circ A_1.
\]
Here $A_i : \R^{n_{i-1}} \to \R^{n_i} $ is an affine function. That is, $A_i$ is the sum of a linear function and a constant vector. $\sigma_{k} : \R^k \to \R^k$ is the $k$-component pointwise ReLU function, where the ReLU is the map $x \mapsto \max\{x,0\}$. The total number of nodes $N$ in a network is the sum $ \sum_{i=0}^L n_i$. We will sometimes use $n = n_0$ to refer to the input dimension and $d = n_L$ to refer to the output dimension.

Since a neural network is a composition of piecewise affine functions, it is piecewise affine. The number of affine pieces of a function $f$ will be denoted $N_A(f)$ or just $N_A$.

When $\mu$ is a distribution, we will adopt the notation of \citet{villani2003topics} and use $f\# \mu$ to denote the pushforward of $\mu$ under $f$, i.e., the distribution obtained by applying $f$ to samples from $\mu$. We will use $U(A)$ to denote the uniform distribution on a set $A \subset \R^n$, and $m(A)$ to denote the Lebesgue measure of that set. We will use $\cN$ to denote a normal distribution, which will always be centered on the origin and have unit covariance matrix.

\section{Increasing the Dimensionality of Noise} \label{nld}

How easy is it to create a generator network that can output more dimensions of noise than it receives? 
It is common in practice to use a far greater output dimension.
Here, we give both upper and lower bounds showing that an increase in dimension can
require a large, complicated network.

\subsection{Constructions for the Uniform Hypercube}

For this section, we restrict ourselves to the case of input and output distributions which are uniform.
To motivate our techniques, we can simplify our problem even further: We could ask how one might approximate a uniform distribution on the unit square using the uniform distribution on the unit interval. We see that we are limited by the fact that the range of the generator net is some one-dimensional curve in $\R^2$, and so the distribution that the generator net produces will have to be supported on this curve. We will want each point of the unit square to be close to some point on the curve so that the mass of the square can be transported to the generated distribution. We are therefore led to consider some kind of (almost) space filling curve.   
An excellent candidate is the graph of the iterated tent map, shown in Figure \ref{fig:2dtentmap}. This function has been useful in the past \citep{montufar2014number, telgarsky2016benefits} since it is highly nonlinear and it can be shown that neural networks must be large to approximate it well. We can create a construction for the univariate to multivariate network which uses tent maps of varying frequencies to fill space.

\begin{figure} 
  \includegraphics[width=0.5\textwidth]{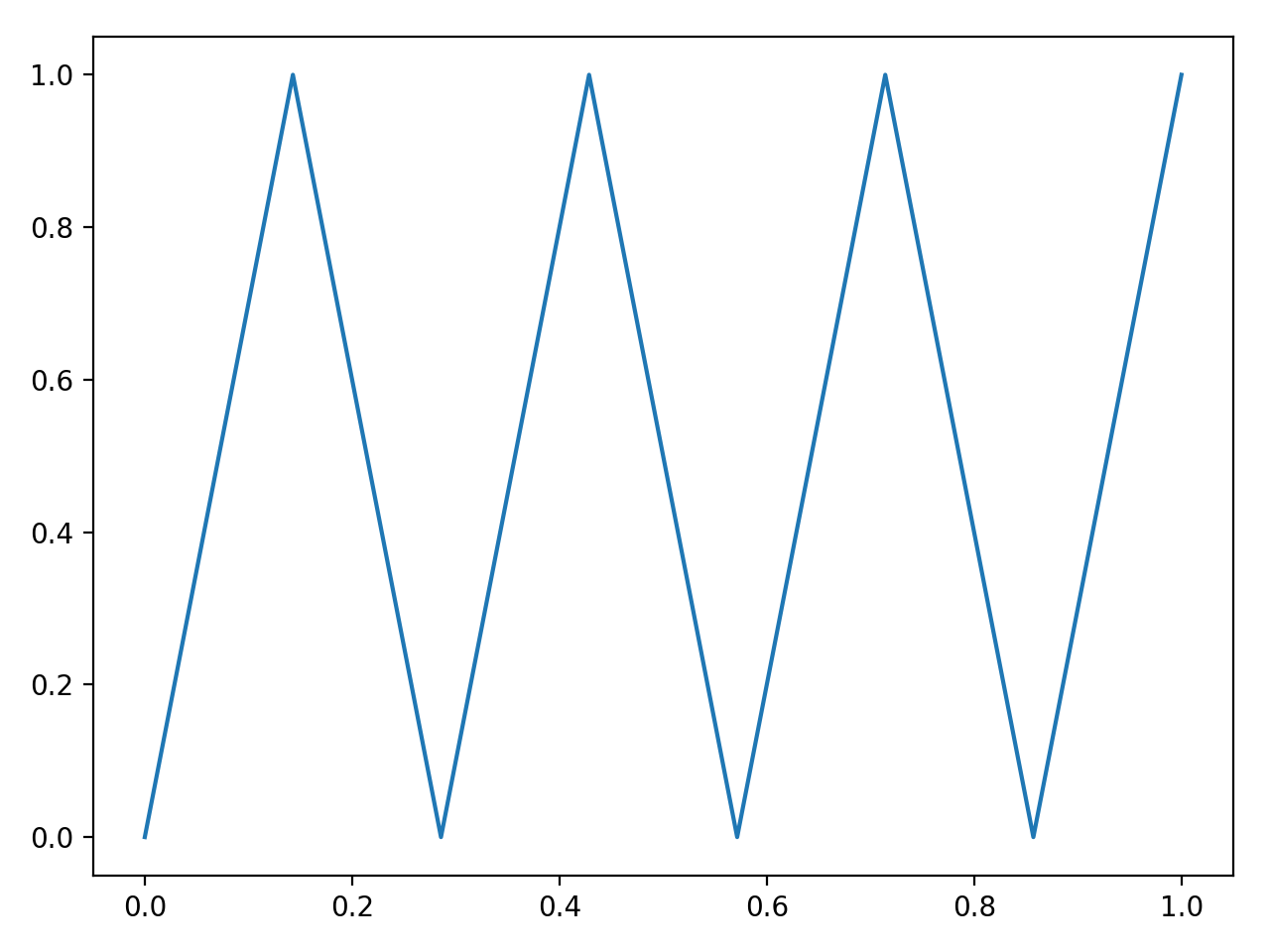}
  \includegraphics[width=0.4\textwidth]{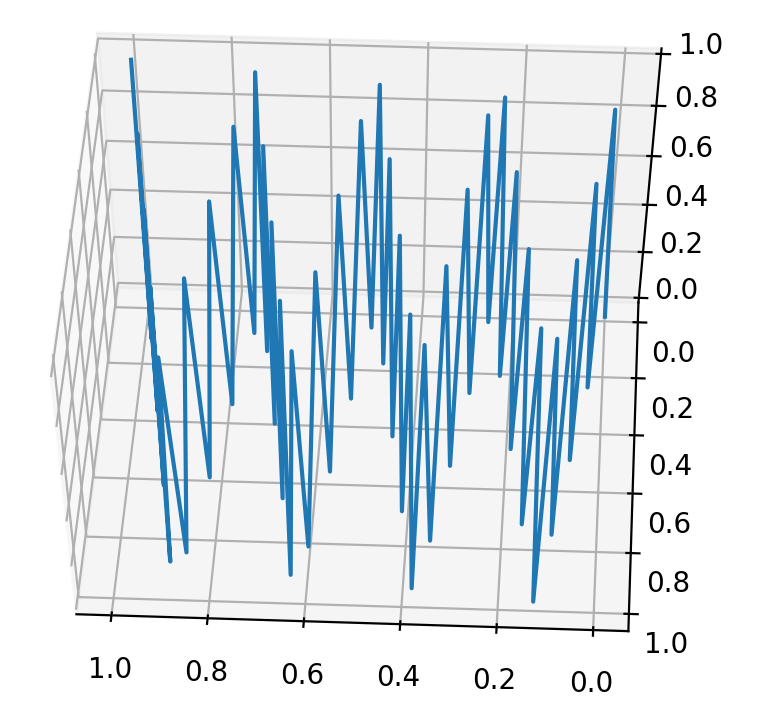}
  \caption{Examples of paths that come near every point in the unit square and the unit cube. }
\end{figure} \label{fig:2dtentmap}

The tentmap construction, which appears in \citep{montufar2014number} and is given in full in the appendix, achieves the following error:
\begin{theorem} \label{onedtentmapconstruction}
Let $\mu$ and $\nu$ respectively denote uniform distributions on $[0,1]$ and $[0,1]^d$. Given any number of nodes $N$ and number of layers $L$ satisfying $N > dL$, we can construct a generative network $f:[0,1] \to [0,1]^d$ such that
    \begin{equation}
      W(f \# \mu, \nu) \le \sqrt{d} \left\lfloor \frac{N - dL}{L} \right\rfloor^{ -\lfloor \frac{L}{d-1} \rfloor}.
      \label{eq:onedtentmap}
    \end{equation}
\end{theorem}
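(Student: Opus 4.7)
The plan is to build $f$ as an iterated-tent-map ``ladder'' whose image traces a discrete space-filling curve in $[0,1]^d$ at scale $1/W^K$, where $W := \lfloor (N-dL)/L \rfloor$ and $K := \lfloor L/(d-1) \rfloor$; the Wasserstein bound will then follow because a side-$1/W^K$ subcube of $[0,1]^d$ has diameter $\sqrt{d}/W^K$. Let $T_W \colon [0,1] \to [0,1]$ denote the triangle wave with $W$ affine pieces, realised by a single hidden ReLU layer of width $W$, so that the $k$-fold composition $T_W^k$ has $W^k$ affine pieces. The target function is
\[
  f(x) \;:=\; \bigl(x,\, T_W^{K}(x),\, T_W^{2K}(x),\, \ldots,\, T_W^{(d-1)K}(x)\bigr).
\]
At layer $\ell \in \{1,\ldots,L\}$, I spend $W$ of the nodes applying one more copy of $T_W$ to the running hidden scalar (so that this scalar equals $T_W^{\ell}(x)$ afterward) and at most $d$ further nodes carrying forward, via one ReLU each, the coordinates $x, T_W^{K}(x), \ldots$ that have already been snapshotted at the earlier layers $K, 2K, \ldots, (d-1)K$. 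The total node count is at most $(W+d)L \le N$, as required.

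The combinatorial heart of the argument is a space-filling lemma. Set $M := W^{dK}$, partition $[0,1]$ into the equal subintervals $I_0, \ldots, I_{M-1}$ of length $W^{-dK}$ (aligned with the affine pieces of $T_W^{(d-1)K}$), and partition $[0,1]^d$ into the $M$ axis-aligned subcubes $C_{h_1,\ldots,h_d}$ of side $1/W^K$ indexed by $(h_1,\ldots,h_d) \in \{0,\ldots,W^K-1\}^d$. On each $I_p$, the slope $\pm W^{(j-1)K}$ of $f_j$ forces it to vary by only $W^{-(d-j+1)K} \le W^{-K}$, so $f(I_p)$ lies inside a single subcube $C_{\phi(p)}$. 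I would then prove by induction on $d$ that $\phi$ is a bijection. The base case $d=1$ is trivial. For the inductive step, each of the $W^{(d-1)K}$ maximal affine pieces of the innermost coordinate $f_d = T_W^{(d-1)K}$ has $f_d$ sweeping $[0,1]$ monotonically while the coarser profile $(f_1,\ldots,f_{d-1})$ is (up to $O(W^{-2K})$) constant on that piece; by the inductive hypothesis these $W^{(d-1)K}$ frozen profiles enumerate the $W^{(d-1)K}$ ``columns'' $C_{h_1,\ldots,h_{d-1}} \times [0,1]$ exactly once, and on each column $f_d$ enumerates the $W^K$ remaining cells as we sweep across the $W^K$ sub-subintervals inside the piece. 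The main obstacle I anticipate is bookkeeping how the alternating signs of the tent-map slopes at each of the $d-1$ frequency scales compose, to ensure the enumeration really is a bijection rather than revisiting some columns while missing others.

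With the bijection $\phi$ in hand, the Wasserstein bound is immediate. Couple $\mu|_{I_p}$ (of mass $W^{-dK}$) with $\nu|_{C_{\phi(p)}}$ (also of mass $W^{-dK}$) by any measurable coupling, and aggregate over $p$. Since $f(I_p) \subset C_{\phi(p)}$, each transported unit of mass travels a Euclidean distance of at most $\sqrt{d}/W^K$, yielding $W(f\#\mu, \nu) \le \sqrt{d}\,W^{-K}$. Substituting the definitions of $W$ and $K$ produces the bound stated in the theorem.
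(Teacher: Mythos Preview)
Your construction and coupling argument are essentially the paper's: the same function $f(x)=(x,T_W^K(x),\ldots,T_W^{(d-1)K}(x))$, the same carry-forward/space-filling node budget, and the same box-to-box bijection followed by the diameter bound $\sqrt{d}/W^K$. One small slip: on a maximal affine piece of $f_d$ (length $W^{-(d-1)K}$), the coarser coordinate $f_{d-1}=T_W^{(d-2)K}$ varies by exactly $W^{-K}$, not $O(W^{-2K})$; what you actually need (and what holds) is that it stays inside a single $W^{-K}$-cell, which already suffices for your induction. The paper proves the bijection directly rather than by induction on $d$: it observes that $t_{k^e}$ sends a $k^{d}$-interval to a single $k^{d-e}$-interval, so two inputs in the same $k^d$-interval share every output cell, while two inputs in distinct $k^d$-intervals differ in the output cell of the coordinate indexed by the first scale at which their intervals diverge. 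That direct argument sidesteps the sign-bookkeeping you flagged as the main obstacle.
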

Thus, as the size of the network grows, the base in \autoref{eq:onedtentmap} grows proportionally to the average width of the network, and the exponent grows proportionally to the depth of the network (while being inversely proportional to the number of outputs). The $N > dL$ requirement comes from using some nodes to carry values forward --- If we were to allow connections between non-adjacent layers, this requirement would go away and $N$ would replace $N - dL$ in the theorem statement.

We now consider the case where our input noise dimension is larger than 1. In this case, we see that one possible construction involves dividing the output dimensions evenly amongst the input dimensions and then placing multiple copies of the above described construction in parallel. This produces the following bound:

\begin{theorem} \label{multidtentmapconstruction}
    Let $\mu$ and $\nu$ respectively denote uniform distributions on $[0,1]^n$ and $[0,1]^d$. Given any number of nodes $N$ and number of layers $L$ satisfying $N > dL$, we can construct a generative network $f:[0,1]^n \to [0,1]^d$ such that
    \[
        W(f \# \mu, \nu) \le
        \sqrt{d} 
        \left\lfloor 
            \frac{N - dL}{nL} 
        \right\rfloor^{-\left\lfloor \frac{L}{\lceil \frac{d-n}{n} \rceil} \right\rfloor} 
        = O\left(\frac{N}{nL} \right)^{-O(\frac{nL}{d})},
    \]
    where the big-$O$ hides factors of $d$ in the base, and constant factors in the exponent.
\end{theorem}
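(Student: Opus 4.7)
The plan is to define $f$ as the direct sum of $n$ independent copies of the univariate tent-map generator of \autoref{onedtentmapconstruction}, with copy $i$ acting on input coordinate $i$ and producing a disjoint block of output coordinates.

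First I would partition the outputs into sizes $d_1,\dots,d_n \in \{\lfloor d/n\rfloor,\lceil d/n\rceil\}$ with $\sum_i d_i = d$, so that each $d_i - 1 \le \lceil (d-n)/n \rceil$. Then I would allocate nodes so subnetwork $i$ has depth $L$ and at least $d_i L + \lfloor(N-dL)/n\rfloor$ nodes; summing, this uses at most $N$ nodes and each subnetwork satisfies the $N_i > d_i L$ hypothesis of \autoref{onedtentmapconstruction} whenever the global $N > dL$ hypothesis holds. Applying \autoref{onedtentmapconstruction} to each block yields $g_i\colon [0,1] \to [0,1]^{d_i}$ with
\[
W\bigl(g_i \# U([0,1]),\, U([0,1]^{d_i})\bigr) \le \sqrt{d_i}\, B^{-E_i},
\]
where $B := \lfloor (N-dL)/(nL) \rfloor$ and $E_i := \lfloor L/(d_i-1)\rfloor \ge E := \lfloor L/\lceil (d-n)/n\rceil \rfloor$. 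Finally set $f(x_1,\dots,x_n) := (g_1(x_1),\dots,g_n(x_n))$.

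The key step is aggregating the per-block Wasserstein errors into the single bound $\sqrt{d}\, B^{-E}$ rather than the $n\sqrt{\lceil d/n\rceil}\, B^{-E}$ that a triangle-inequality coupling would yield. For this I would use the fact, implicit in the proof of \autoref{onedtentmapconstruction}, that its Wasserstein bound is certified by a deterministic ``nearest point on the curve'' transport: there is a map $T_i\colon [0,1]^{d_i} \to g_i([0,1])$ with pointwise bound $|T_i(y) - y| \le \sqrt{d_i}\, B^{-E_i}$ and $T_i \# U([0,1]^{d_i}) = g_i \# U([0,1])$. Since $\mu = U([0,1]^n)$ and $\nu = U([0,1]^d)$ both factor along the coordinate partition, $T(y) := (T_1(y_1),\dots,T_n(y_n))$ is a valid transport from $\nu$ to $f\#\mu$, and
\[
|T(y) - y| = \sqrt{\textstyle\sum_i |T_i(y_i) - y_i|^2} \le \sqrt{\textstyle\sum_i d_i\, B^{-2E}} = \sqrt{d}\, B^{-E},
\]
so integrating gives $W(f\#\mu,\nu) \le \sqrt{d}\,B^{-E}$. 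The big-$O$ form follows by routine floor manipulations.

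The main obstacle is precisely this aggregation. A generic product-coupling argument applied only to the $W_1$ conclusion of \autoref{onedtentmapconstruction} pays a spurious $\sqrt{n}$ factor; recovering the clean $\sqrt{d}$ constant requires opening the 1D proof to expose the underlying pointwise (equivalently, $W_\infty$-style) transport bound and then aggregating under the Euclidean norm across blocks. Everything else — feasibility of the node allocation, monotonicity of $B^{-E_i}$ in $E_i$ for $B \ge 1$, and the final floor-arithmetic cleanup to the big-$O$ form — is straightforward.
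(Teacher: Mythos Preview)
Your proposal is correct and follows essentially the same approach as the paper: both parallelize the one-dimensional tentmap construction across the $n$ input coordinates, allocating to each block $d_i$ outputs and a proportional share of the $N-dL$ ``space-filling'' nodes, and both obtain the $\sqrt{d}/k$ Wasserstein bound via the box-coupling structure of the iterated tentmap. The only organizational difference is that the paper packages the coupling argument as a single multivariate lemma (showing directly that the full map $f$ sends input boxes bijectively to output boxes of side $1/k$, hence any coupling supported on same-box pairs has cost $\le \sqrt d/k$), whereas you apply \autoref{onedtentmapconstruction} per block and then aggregate --- but, as you correctly note, this forces you to open the 1D proof to extract the pointwise ($W_\infty$-type) transport bound, which is exactly the box-coupling structure the paper's lemma records; so the two arguments coincide once unpacked.
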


Note that this generalizes \autoref{onedtentmapconstruction}. The proof can be found in the appendix. This bound is at its tightest when $d$ is a multiple of $n$, in which case $\frac{d-n}{n}$ is an integer, and the exponent matches exactly that in the lower bound determined later. The construction works more smoothly with this even divisibility because the output nodes can be evenly split among the inputs, and it is easier to parallelize the construction.

\subsection{Lower Bounds for the Uniform Box }

We now provide matching lower bounds. For this, it suffices to count the affine pieces. Bounds on the number of affine pieces have been proved before, but only with univariate input
\citep{telgarsky2016benefits}; here we allow the network input to be multidimensional.
\begin{lemma} \label{MultidimAffinePiece}
    Let $f:\R^{n_0} \to \R^d$ be a function computed by a neural network with at most $N$ total nodes and $L$ layers. Then the domain of $f$ can be divided into $N_A$ convex pieces on which the function is affine, where
    \[ 
        N_A \le \left( e\frac{N}{n_0L}+e \right)^{n_0L} .
    \]
\end{lemma}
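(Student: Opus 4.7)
The plan is to induct on depth, bounding the multiplicative increase in the number of affine pieces contributed by each ReLU layer via Zaslavsky's hyperplane-arrangement count in the \emph{input} space $\R^{n_0}$, and then aggregating across layers with AM-GM.

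I would first record two auxiliary facts. (a) Any collection of $k$ affine hyperplanes partitions a convex region in $\R^{n_0}$ into at most $\sum_{j=0}^{n_0}\binom{k}{j}$ cells (and restriction to a convex $P\subset\R^{n_0}$ can only decrease this count, since each cell is convex and so meets $P$ in at most one component). (b) For every integer $k\ge 0$, $\sum_{j=0}^{n_0}\binom{k}{j}\le (ek/n_0+e)^{n_0}$: when $k\ge n_0$ this is the usual Sauer-Shelah estimate $(ek/n_0)^{n_0}$, and when $k<n_0$ it follows from $\sum_{j}\binom{k}{j}\le 2^k\le e^{n_0}$. Then, writing $h_i = A_i\circ\sigma_{n_{i-1}}\circ\cdots\circ\sigma_{n_1}\circ A_1$ so that $f=h_L$ and $h_{i+1}=A_{i+1}\circ\sigma_{n_i}\circ h_i$, I would induct on $i$ to show that $\R^{n_0}$ decomposes into at most $\prod_{j=1}^{i-1}(en_j/n_0+e)^{n_0}$ convex polyhedra on each of which $h_i$ is affine. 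The base case $i=1$ is trivial since $h_1=A_1$. For the step, on a polyhedral piece $P$ of $h_i$ the composition $A_{i+1}\circ h_i$ is affine, so each of the $n_i$ pre-activations of layer $i+1$ switches sign across a single affine hyperplane of $\R^{n_0}$; applying (a)-(b) to this arrangement restricted to $P$ subdivides $P$ into at most $(en_i/n_0+e)^{n_0}$ convex sub-pieces on which (once the signs of the ReLUs are fixed) $h_{i+1}$ is affine.

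Multiplying across layers gives $N_A\le\prod_{i=1}^{L-1}(en_i/n_0+e)^{n_0}$, and AM-GM applied to the $L-1$ positive numbers $en_i/n_0+e$, together with $\sum_{i=1}^{L-1}n_i\le N$, yields $N_A\le(eN/(n_0(L-1))+e)^{n_0(L-1)}$. Finally, the elementary monotonicity of $m\mapsto(1+a/m)^m$ in $m$ (applied with $a=N$ to pass from $m=n_0(L-1)$ to $m=n_0 L$), combined with the slack $e^{n_0(L-1)}\le e^{n_0 L}$, upgrades this to the claimed $(eN/(n_0 L)+e)^{n_0 L}$. The main obstacle is the small-width regime in which some $n_i<n_0$: the classical Sauer-Shelah bound $(ek/n_0)^{n_0}$ breaks down there, so fact (b) must be stated as a uniform inequality, and the "$+e$" in the lemma statement is precisely what is needed to absorb the trivial $2^k$ bound and then to survive the final $L-1\mapsto L$ exponent swap.
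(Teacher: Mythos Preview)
Your proof is correct and follows the same depth-induction skeleton as the paper: both show that each ReLU layer multiplies the piece count by at most $\sum_{j=0}^{n_0}\binom{n_i}{j}$, bound this by $(en_i/n_0+e)^{n_0}$, and finish with AM--GM across layers. The difference lies in how the per-layer factor is justified. The paper works in the \emph{output} space $\R^{n_i}$: on a piece $S_k$ the image $g(S_k)$ lies in an $n_0$-dimensional affine flat, and a separate lemma bounds the number of orthants of $\R^{n_i}$ that such a flat can meet by $\sum_{j\le n_0}\binom{n_i}{j}$. You instead work in the \emph{input} space $\R^{n_0}$, pulling back the $n_i$ ReLU sign-change loci to affine hyperplanes in $P\subset\R^{n_0}$ and invoking Zaslavsky's region count directly. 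These are dual perspectives yielding the identical binomial sum. For bounding that sum the paper goes through $\binom{n_i+n_0}{n_0}\le(e(n_i+n_0)/n_0)^{n_0}$, which handles small $n_i$ without a case split; your route via Sauer--Shelah plus the trivial $2^{n_i}$ bound for $n_i<n_0$ lands at the same expression. You are also more explicit about the final $(L-1)\to L$ exponent swap via monotonicity of $m\mapsto(1+a/m)^m$, which the paper absorbs silently into its layer indexing.
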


This lemma has also been proven in concurrent work \citep{zhang2018tropical} using the techniques of tropical geometry. Our proof works essentially by induction on the number of layers: We look at the set of possible activations of the $i$th layer, we see that it is a union of convex affine sets of dimension at most $n_0$. The application of the ReLU maps each of these convex affine sets into $O(n_i)^{n_0}$ convex affine sets, where $n_i$ is the number of nodes in the $i$th layer.

The one-dimensional tent map construction tells us that for a given number of nodes and number of layers, we can construct a function with a number of affine pieces bounded by the size of the network. When constructing multidimensional input networks with a high number of affine pieces, we can always parallelize several of these tentmaps to get a map with the product of the number of pieces in the individual networks. What this lemma guarantees is that, up to a constant factor in the number of nodes, this construction is optimal for producing as many affine pieces as possible. This gives us confidence that our parallelized tent map construction for low-dimensional uniform to high-dimensional uniform may be close to optimal.

To show that our construction is optimal, we need to show that it approximates the high-dimensional uniform distribution about as accurately as any piecewise affine function with the same number of pieces $N_A$. To do this, we will use the fact that the range of a piecewise affine function is a subset of the union of ranges of its constituent affine functions. We then show that any distribution on a union like this is necessarily distant from the target uniform distribution. 

\begin{theorem} \label{distributiondimensionalitygap}
    Let $B$ be a bounded measurable subset of $\R^d$ of radius $l$, let $f: \R^n \to \R^d$ be piecewise affine with $n < d$, and let $P$ be any distribution on $\R^n$. The Wasserstein distance between $f\#P$ and the uniform distribution $U_B$ on $B$ has the following lower bound:
    \[ 
        W(U_B, f\#P) \ge k \left( l^{-n} \frac{m(B)}{N_A} \right)^\frac{1}{d-n},
    \]
    where $k$ depends on $n$ and $d$.
\end{theorem}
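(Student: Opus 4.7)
The plan is to exploit the dimensional gap between the support of $f\#P$ and the target $d$-dimensional uniform $U_B$. First, decompose the domain of $f$ into the $N_A$ convex regions on which $f$ agrees with a single affine map $\R^n \to \R^d$. Each such affine map has image contained in an affine subspace of $\R^d$ of dimension at most $n$, so the support of $f\#P$ is contained in $S := \bigcup_{i=1}^{N_A} H_i$, a union of $N_A$ affine $n$-flats in $\R^d$.

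Next, I would reduce the Wasserstein distance to a distance-to-$S$ computation. For any coupling $\pi \in \Pi(U_B, f\#P)$ and $\pi$-a.e. $(x,y)$, one has $|x - y| \geq \mathrm{dist}(x, S)$ since $y \in S$. Integrating and taking the infimum over couplings gives
$$W(U_B, f\#P) \;\geq\; \int_B \mathrm{dist}(x, S)\, dU_B(x).$$
Bounding this integral amounts to controlling how much of $B$ lies within $\eps$ of $S$. A standard tube computation shows that for a single $n$-flat $H_i$, the set $\{x \in B : \mathrm{dist}(x, H_i) \leq \eps\}$ is contained in the product of an $n$-disk of radius $l$ (the radius bound on $B$) with a $(d-n)$-disk of radius $\eps$, giving volume at most $\omega_n \omega_{d-n}\, l^n \eps^{d-n}$, where $\omega_k$ denotes the volume of the unit $k$-ball. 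Summing over the $N_A$ flats,
$$m\bigl(\{x \in B : \mathrm{dist}(x, S) \leq \eps\}\bigr) \;\leq\; N_A\, \omega_n \omega_{d-n}\, l^n \eps^{d-n}.$$

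To finish, I would apply a Markov-style estimate: for any $\eps > 0$,
$$\int_B \mathrm{dist}(x, S)\, dU_B(x) \;\geq\; \eps \cdot \Pr_{x \sim U_B}[\mathrm{dist}(x,S) > \eps] \;\geq\; \eps\left(1 - \frac{N_A\, \omega_n \omega_{d-n}\, l^n \eps^{d-n}}{m(B)}\right).$$
Choosing $\eps$ so that the bracketed factor equals $\tfrac{1}{2}$ yields a lower bound proportional to $\bigl(m(B)/(N_A\, l^n)\bigr)^{1/(d-n)}$, matching the claimed form with a constant $k(n,d)$ absorbing the ball volumes.

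The main obstacle is the tube-volume estimate: I must verify that ``radius $l$'' means $B$ is contained in a ball of radius $l$ (so the orthogonal projection of $B$ onto any affine $n$-flat lies in an $n$-disk of radius $l$), and must track the $n,d$-dependence into $k$ cleanly. Everything else is routine, modulo a mild measurability check for the $\eps$-tubes.
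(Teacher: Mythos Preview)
Your proposal is correct and follows a genuinely different route from the paper. You use a \emph{tube-plus-Markov} argument: bound the volume of the $\eps$-neighborhood of $S=\bigcup_i H_i$ inside the radius-$l$ ball, then extract a lower bound on $\int_B \mathrm{dist}(x,S)\,dU_B$ by choosing $\eps$ so that half of $B$ lies outside the tube. The paper instead partitions $B$ into Voronoi-like regions $B_i = \{x\in B:\ i=\arg\min_j \mathrm{dist}(x,S_j)\}$, proves a sharp single-flat lemma (\emph{cylinder lemma}) lower-bounding $\int_{B_i}\mathrm{dist}(x,S_i)\,dx$ via a mass-rearrangement argument onto a product-of-balls region $R^*$, and then combines the pieces with Jensen's inequality applied to the convex map $t\mapsto t^{1+1/(d-n)}$. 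Your approach is shorter and more elementary; the paper's yields a better constant (its prefactor $\tfrac{d-n}{d-n+1}$ tends to $1$, whereas your Markov step caps the prefactor near $\tfrac12$, and your $\eps$ choice introduces an extra $2^{-1/(d-n)}$). Since the theorem only claims some $k=k(n,d)$, both are adequate. Your worry about ``radius $l$'' is resolved exactly as you guessed: in the paper's lemma, $B$ is assumed to sit inside a ball $B_0$ of radius $l$, so the orthogonal projection of $B$ onto any $n$-flat lies in an $n$-disk of radius $l$, validating your tube estimate.
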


Note that our technique can produce bounds not just for the unit cube on $n$-dimensions but for any uniform distribution on any bounded subset of $\R^n$ such as the sphere. When we combine this with our analysis of $N_A$ in \autoref{MultidimAffinePiece}, we get a lower bound result for a given number of nodes and layers.

\begin{theorem} \label{nlderrorlowerbound}
    Let $\mu$ and $\nu$ respectively denote uniform distributions on $[0,1]^n$ and $[0,1]^d$. Given any number of nodes $N$ and number of layers $L$, for any generative network $f:[0,1]^n \to [0,1]^d$, we have
    \[
        W(f \# \mu, \nu) \ge k \left( e\frac{N}{nL}+e \right)^{-\frac{nL}{d-n}} = O\left(\frac{N}{nL} \right)^{-\frac{nL}{d-n}} , 
    \]
    where the big-$O$ hides factors of $n$ and $d$ in the base.
\end{theorem}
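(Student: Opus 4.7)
The plan is to chain the two preceding results: use \autoref{MultidimAffinePiece} to bound the number of affine pieces of $f$ in terms of $(N,L,n)$, then feed that bound into \autoref{distributiondimensionalitygap} with $B=[0,1]^d$ and $P=\mu$ to convert it into a Wasserstein lower bound. Everything else is bookkeeping of constants depending on $n$ and $d$.

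First, since $f:[0,1]^n\to[0,1]^d$ is computed by a ReLU network with at most $N$ nodes in $L$ layers and input dimension $n_0=n$, \autoref{MultidimAffinePiece} gives
\[
  N_A(f) \;\le\; \left(e\frac{N}{nL}+e\right)^{nL}.
\]
Since $f$ is a composition of affine maps and ReLUs, it is piecewise affine on its domain, so this bound on $N_A$ is exactly the quantity appearing in the hypothesis of \autoref{distributiondimensionalitygap}.

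Second, take $B=[0,1]^d$, which is bounded, measurable, has Lebesgue measure $m(B)=1$, and has radius $l=\sqrt{d}/2$ (i.e., $l=O(\sqrt d)$). The hypothesis $n<d$ holds by assumption of the regime. Applying \autoref{distributiondimensionalitygap} to $U_B=\nu$ and $f\#\mu$ yields
\[
  W(\nu,\,f\#\mu) \;\ge\; k_{n,d}\left(l^{-n}\,\frac{m(B)}{N_A(f)}\right)^{\!1/(d-n)} \;=\; k_{n,d}\,l^{-n/(d-n)}\,N_A(f)^{-1/(d-n)},
\]
where $k_{n,d}$ is the constant of that theorem.

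Third, substituting the bound on $N_A(f)$ from the first step:
\[
  W(\nu,\,f\#\mu) \;\ge\; k_{n,d}\,l^{-n/(d-n)}\left(e\frac{N}{nL}+e\right)^{-nL/(d-n)}.
\]
Absorbing the fixed factor $k_{n,d}\,l^{-n/(d-n)}$ (which depends only on $n$ and $d$) into the constant $k$ in the statement yields exactly the claimed inequality, and hence the big-$O$ form. There is no real obstacle here; the only mild subtlety is simply recognizing that the radius $l$ of $[0,1]^d$ and the constant $k_{n,d}$ of \autoref{distributiondimensionalitygap} are absorbed into the final constant (which is why the theorem statement warns that the big-$O$ hides factors of $n$ and $d$ in the base). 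All the real content of the lower bound is already packaged into the two invoked results.
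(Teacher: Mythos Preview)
Your proposal is correct and follows exactly the paper's own approach: the paper's proof is a single sentence invoking \autoref{distributiondimensionalitygap} with the piece-count bound from \autoref{MultidimAffinePiece} and $P=\mu$, which is precisely the chaining you carry out (with the constants made explicit). There is nothing to add.
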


\begin{proof}
  This follows from applying \autoref{distributiondimensionalitygap} taking $f$ as a neural network with the affine piece bound from \autoref{MultidimAffinePiece}, and $P$ as $\mu$, the uniform distribution on $[0,1]^n$.
\end{proof}

\section{Transporting between Univariate Distributions} \label{ned}

The two most common distributions used in generative networks in practice are the uniform distribution, and the normal. How easily can one of these distributions can be used to approximate the other? We will deal with the simplest case where our input and output distributions are one-dimensional. If we can construct a neural net for this case, we can parallelize multiple copies of the net if we want to move between normal and uniform distributions in higher dimensions.

\subsection{Approximation of a Uniform Distribution by a Normal}

Perhaps the simplest idea for approximating a uniform distribution with a generative network with normal noise is to let the network approximate $\Phi$, the cumulative distribution function of the normal. 
To approximate $\Phi$, we will approximate its Maclaurin series: 
\[
  \Phi(z) = 
  \frac12 
  + \frac{1}{\sqrt{2\pi}}
     \sum_{n=0}^\infty \frac{(-1)^n z^{2n+1} }{n! (2n+1) 2^n}.
\]
This series has convergence properties which allow a network based on its truncation to work.

\citet[Proposition 3c]{yarotsky2017error} showed that $f:(x,y) \mapsto xy$ over $[-M, M]^2$ can be efficiently approximated by neural networks, 
  in the sense that there is a network with $O(\ln(1/\eps) + \ln(M))$ nodes and layers computing a function $\hat{f}$ with $|f - \hat{f}| \le \eps$.  
\citet{yarotsky2017error} uses this to show that certain functions with small derivatives could be approximated well. We will show a similar result, which depends on the good behavior of the Taylor series of $\Phi$. 

Naturally, if $\tilde{f}$ is a neural network approximating $f$ and $\tilde{g}$ approximates $g$, then we can compose these approximations to get an approximation of the composition. In particular, if $g$ has a Lipschitz constant, then the composition approximation will depend on this Lipschitz constant and the accuracies of the individual approximations. A ``composition lemma'' to this effect is included in the appendix as \autoref{compositionlemma}. We will use this idea several times to construct a variety of function approximations.

We will now consider the method of approximating functions by approximating their Taylor series with neural networks. To do this, we first demonstrate a network which takes a univariate input $x$ in $[-M, M]$ and returns the multivariate output $(x^0, x^1, x^2, \ldots, x^n)$.

\begin{theorem} \label{powerseries} 
    The function $f: x \mapsto (x^0, \ldots, x^n)$ on $[-M, M]$ can be computed uniformly to within $\eps$ 
    by a neural network of size $\poly(n, \ln(M), \ln(1/\eps))$.
\end{theorem}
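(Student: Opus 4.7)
The plan is to build the powers iteratively using the multiplication network of Yarotsky as the only nontrivial gadget, and then invoke the composition lemma (\autoref{compositionlemma}) to control error propagation. Concretely, I would define approximations $p_k \approx x^k$ by the recursion $p_0 = 1$, $p_1 = x$, and $p_{k+1} = \tilde{m}(p_k, x)$, where $\tilde{m}$ is the Yarotsky multiplication network on $[-M', M']^2$ with $M' := M^n + 1$ (enough slack to absorb all approximation errors that will ever arise); by the cited result this requires only $O(\ln(M') + \ln(1/\eta)) = O(n \ln M + \ln(1/\eta))$ nodes and layers to achieve any pointwise accuracy $\eta$.

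Next I would carry out the error analysis. Writing $\delta_k := \sup_{x \in [-M,M]} |p_k(x) - x^k|$, a single multiplication step introduces an $\eta$ error and inflates prior error by the Lipschitz constant of multiplication on $[-M', M']^2$, which is at most $M'$. Hence $\delta_{k+1} \le M' \delta_k + \eta$, and unrolling gives $\delta_n \le \eta \sum_{k=0}^{n-1} (M')^k \le n (M')^{n-1} \eta$. Choosing $\eta = \eps / (n (M')^{n})$, which still satisfies $\ln(1/\eta) = O(\ln n + n \ln M + \ln(1/\eps))$, drives every $\delta_k$ below $\eps$. This argument also shows that each intermediate value $p_k$ stays safely inside $[-M', M']$, so the multiplication gadget is always used within its valid domain.

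Finally I would assemble the full network and count nodes. The architecture stacks $n-1$ multiplication gadgets in sequence; between them, all $k+2$ of the previously computed values $p_0, p_1, \dots, p_k$ plus the original input $x$ must be carried forward, which adds an $O(n)$ multiplicative overhead per layer via the standard ReLU identity trick $u = \sigma(u) - \sigma(-u)$. The total node count is therefore at most $(n-1)$ multiplication gadgets times $O(\ln n + n \ln M + \ln(1/\eps))$ each, times an $O(n)$ carry factor, giving a bound of $\poly(n, \ln M, \ln(1/\eps))$ as claimed.

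The main obstacle, though largely routine given the cited tools, is keeping the domain bookkeeping consistent: one must choose a single enlarged domain $[-M', M']$ that handles every intermediate product for every $k \le n$ and every $x \in [-M, M]$, and verify that the Lipschitz-based error blow-up together with the chosen per-step accuracy $\eta$ is self-consistent (i.e.\ $p_k$ never escapes $[-M', M']$, so the multiplication subnet's guarantee actually applies). Beyond that, the proof is mechanical: Yarotsky's gadget plus the composition lemma plus an arithmetic bound on the recurrence.
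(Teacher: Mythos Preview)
Your proposal is correct and follows essentially the same approach as the paper's proof: iterate Yarotsky's multiplication gadget to build $x^k$ from $x^{k-1}$ and $x$, and control error propagation via the Lipschitz constant of multiplication together with the composition lemma. The only cosmetic difference is that the paper assigns a different accuracy $\eps/(2M)^{n-k}$ to each multiplication step, whereas you use a single uniform per-step accuracy $\eta$ and unroll the recurrence $\delta_{k+1}\le M'\delta_k+\eta$; both lead to the same $\poly(n,\ln M,\ln(1/\eps))$ bound.
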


The proof relies on iteratively composing the multiplication function $x^i = x^{i-1} \cdot x$ using the ``composition lemma'' to get each of the $x^i$.

Now that we know the size required to approximate the powers of $x$, we may use this to approximate the Maclaurin series of $\Phi$.

\begin{theorem} \label{normalcdfapprox}
    The function $\Phi$ can be approximated uniformly to within $\eps$ by a network of size $\poly(\ln(1/\eps))$.
\end{theorem}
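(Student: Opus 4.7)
The plan is to combine a tail-truncation argument with a neural approximation of the Maclaurin polynomial of $\Phi$. Since $\Phi$ is bounded in $[0,1]$ and tends to its limits exponentially fast, the first step is to reduce to a bounded input interval by precomposing with a ReLU clipping gadget $c(x) := \max\{x+M,0\} - \max\{x-M,0\} - M$, which equals $x$ on $[-M,M]$ and saturates at $\pm M$ outside, using only two ReLU nodes. Choosing $M = \Theta(\sqrt{\ln(1/\eps)})$ so that the Mills ratio bound gives $1 - \Phi(M) \le \eps/3$, and using monotonicity of $\Phi$, one has $|\Phi(x) - \Phi(c(x))| \le \eps/3$ for every $x \in \R$. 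It therefore suffices to approximate $\Phi$ on $[-M,M]$ to within $2\eps/3$.

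The second step is to truncate the Maclaurin series at degree $2n+1$. Writing $a_k := (-1)^k / (\sqrt{2\pi}\,k!(2k+1)2^k)$, the truncation error on $[-M,M]$ is bounded by $\sum_{k>n} |a_k| M^{2k+1}$. Using $k! \ge (k/e)^k$, the $k$-th summand is at most a constant multiple of $M\,(eM^2/(2k))^k$; once $k$ exceeds $\Theta(M^2)$, consecutive ratios fall below $1/2$, so a choice $n = \Theta(M^2) = \Theta(\ln(1/\eps))$ yields a geometric tail of size $\eps/3$.

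The third step is to implement the polynomial $P_n(x) := 1/2 + \sum_{k=0}^n a_k x^{2k+1}$ as a network. By \autoref{powerseries} applied on $[-M,M]$, the monomials $(x^0, x^1, \ldots, x^{2n+1})$ can be computed uniformly to accuracy $\delta$ using $\poly(n, \ln M, \ln(1/\delta))$ nodes; a single final affine layer with weights $a_k$ then forms $P_n$. Since $\sum_k |a_k| = O(1)$ (by comparison with $\sum_k 1/(k!\,2^k) = e^{1/2}$), the monomial errors propagate to an output error of order $\delta$, so $\delta = \Theta(\eps)$ suffices to contribute at most $\eps/3$. With $n = \Theta(\ln(1/\eps))$, $\ln M = O(\ln\ln(1/\eps))$, and $\ln(1/\delta) = O(\ln(1/\eps))$, the total network size is $\poly(\ln(1/\eps))$. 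Composing with the clipping via the composition lemma and summing the three sources of error (clipping, truncation, monomial approximation), each at most $\eps/3$, yields a uniform error of at most $\eps$.

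The main obstacle is the truncation analysis: because the bounded domain $[-M,M]$ itself grows with $\eps$, the monomial factors $M^{2k+1}$ grow fast, and one must carefully balance this against the factorial decay $k!\,2^k$ to confirm that a degree of $O(\ln(1/\eps))$ suffices. The clipping reduction and the linear combination bookkeeping are straightforward given this control and the composition lemma.
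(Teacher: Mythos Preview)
Your proposal is correct and follows essentially the same approach as the paper: truncate the Maclaurin series of $\Phi$, approximate the monomials via \autoref{powerseries}, and handle the tails by restricting to a bounded interval of length $\polylog(1/\eps)$. The only cosmetic differences are that you clip the input (the paper interpolates the output near $\pm M$) and that your choice $M = \Theta(\sqrt{\ln(1/\eps)})$ is sharper than the paper's $M = O(\ln(1/\eps))$, which yields a smaller polynomial degree but does not change the overall $\poly(\ln(1/\eps))$ conclusion.
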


To show this we apply \autoref{powerseries} with a suitable choice of $M$ and $n$ and then use the monomial approximations to get a Taylor approximation of $\Phi$. Knowing that we can approximate $\Phi$ well, we can give a precise bound on the Wasserstein distance of this construction.

\begin{theorem}
  We can construct a generative network with $\polylog(1/\eps)$ nodes and univariate normal noise that can output a distribution with Wasserstein distance $\eps$ from uniform.
\end{theorem}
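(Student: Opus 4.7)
The plan is to use $\hat\Phi$, the $\polylog(1/\eps)$-sized approximation of the normal CDF from Theorem~\ref{normalcdfapprox}, directly as the generator, and show $\hat\Phi\#\cN$ is within Wasserstein distance $\eps$ of $U([0,1])$.

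The key observation is the classical probability integral transform: if $Z \sim \cN$ then $\Phi(Z) \sim U([0,1])$ exactly. Hence the pair $(\hat\Phi(Z), \Phi(Z))$ with $Z \sim \cN$ is a valid coupling of $\hat\Phi\#\cN$ with $U([0,1])$, so by the definition of Wasserstein distance
$$W(\hat\Phi\#\cN,\, U([0,1])) \;\le\; \E_{Z\sim\cN}\bigl[|\hat\Phi(Z) - \Phi(Z)|\bigr].$$
If Theorem~\ref{normalcdfapprox} delivers a $\hat\Phi$ with $|\hat\Phi - \Phi| \le \eps$ uniformly, the right-hand side is at most $\eps$ and the result follows, with network size $\polylog(1/\eps)$.

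The main subtlety I expect to encounter is controlling tail behaviour. The Taylor-based $\hat\Phi$ is really only accurate on a bounded window $[-M,M]$ with $M = \Theta(\sqrt{\ln(1/\eps)})$, and as a polynomial it blows up outside that window. I plan to handle this by clipping the input to $[-M,M]$ and clipping the output to $[0,1]$, each using a constant number of ReLU nodes. The expectation then splits: on $\{|Z|\le M\}$ the integrand is at most $\eps/2$ by invoking Theorem~\ref{normalcdfapprox} with target accuracy $\eps/2$, while on $\{|Z|>M\}$ the clipped integrand is at most $1$, and the Gaussian tail bound $\Pr(|Z|>M) \le 2e^{-M^2/2}$ makes this contribution at most $\eps/2$ for the chosen $M$.

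For the size bookkeeping, Theorem~\ref{normalcdfapprox} uses $\polylog(1/\eps')$ nodes with $\eps' = \eps/2$, and the implicit domain parameter $M = O(\sqrt{\ln(1/\eps)})$ is absorbed into the $\polylog$. The two constant-size clipping gadgets do not change this, so the final generator has $\polylog(1/\eps)$ nodes. The only genuine obstacle is the tail-versus-polynomial-growth tradeoff described above; once that is handled by clipping, the rest is a one-line coupling argument.
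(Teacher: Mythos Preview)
Your proposal is correct and follows essentially the same approach as the paper: use the coupling $(\hat\Phi(Z),\Phi(Z))$ with $Z\sim\cN$ and bound $W$ by $\E|\hat\Phi(Z)-\Phi(Z)|$. The only difference is bookkeeping: the paper's statement of Theorem~\ref{normalcdfapprox} already delivers a \emph{uniform} $\eps$-approximation of $\Phi$ on all of $\R$ (the tail clipping and output clamping you describe are built into that theorem's construction), so the paper's proof of the present theorem is a one-line coupling argument without any separate tail analysis. Your explicit split into $\{|Z|\le M\}$ and $\{|Z|>M\}$ is redundant given that, but harmless.
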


\begin{proof}
  Using \autoref{normalcdfapprox}, let $\tilde{\Phi}: \R \to [0,1]$ approximate $\Phi$ uniformly to within $\frac{\eps}{2}$. Consider the coupling $\pi$ between the output of this network and the true uniform distribution which consists of pairs $(\Phi(z), \tilde{\Phi}(z))$, where $z$ is normally distributed:
  \[
    W\del{U([0,1]), \tilde{\Phi}\#\cN}
    = W\del{\Phi \# \cN, \tilde{\Phi}\#\cN}
    \le
    \int_{\R} |\Phi(z) - \tilde{\Phi}(z)| \cdot \frac{1}{\sqrt{2\pi}} e^{-z^2/2} dz.
  \]
  But since $|\Phi - \tilde{\Phi}|$ is less than $\eps$ everywhere, this integral is no more than $\eps$, so we can indeed create a generative network of $\polylog(1/\eps)$ nodes for this task.
\end{proof}

\subsection{Approximation of a Normal Distribution by Uniform}

Having shown that normal distributions can approximate uniform distributions with $\polylog(1/\eps)$ nodes, let's see if the reverse is true. For this we'll need a few lemmas.

For analytic convenience, a few of our intermediate constructions will use networks with both ReLU activations, as well as step functions  $H(x) = \1[x > 0]$. 
Networks with these two allowed activations have a convenient property which allows them to be used to study vanilla ReLU networks: If there is a ReLU/Step network approximating a function $f$ uniformly, then $f$ can be uniformly approximated by a comparably-sized network on all but an arbitrarily small positive subset of its domain.

\begin{lemma} \label{stepimpliesgiveup}
  Let $\mu$ be a measure, and $A$ a measurable set with $\mu(A) < \infty$.
    Suppose $f: \R^{n} \to \R^{d}$ can be approximated uniformly to within $\eps$ on $A$
    by a function $\tilde f$ computed by a ReLU/Step network with $N$ nodes. 
    Then for any $\zeta > 0$, there exists a ReLU network with $O(N)$ nodes which approximates $f$ to within $2\eps$ on a set $B$ where $A \setminus B$ has measure less than $\zeta$.
\end{lemma}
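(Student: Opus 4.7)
The plan is to replace each step activation appearing in $\tilde f$ by a steep ReLU-based ramp and to argue that, outside a small ``bad set,'' the two networks agree pointwise. For $\delta > 0$ define
\[
  r_\delta(x) := \tfrac{1}{\delta}\bigl(\sigma(x) - \sigma(x-\delta)\bigr),
\]
where $\sigma$ is the ReLU; then $r_\delta(x) = H(x)$ for every $x \notin (0,\delta)$, and $r_\delta$ costs only two ReLU units. Replacing each of the at most $N$ step activations in $\tilde f$ by such a ramp yields a ReLU network with $O(N)$ nodes, which is the size claim.

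Enumerate the step activations as $H^{(1)},\dots,H^{(k)}$ with $k \le N$, and let $g_j : A \to \R$ denote the scalar signal fed into $H^{(j)}$ in $\tilde f$, viewed as a function of the network input. Each $g_j$ is measurable, being a composition of affine maps, ReLUs, and the earlier step functions. I would next show by induction on $j$ that if an input $x \in A$ satisfies $g_j(x) \notin (0,\delta_j)$ for every $j$, then the signal arriving at the $j$-th ramp in the modified network is exactly $g_j(x)$, so $r_{\delta_j}(g_j(x)) = H(g_j(x))$; iterating through all $k$ activations shows that the modified ReLU network evaluates to $\tilde f(x)$ at such $x$.

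It remains to choose the $\delta_j$ small enough that the excluded set is tiny. Since $\bigcap_{\delta>0}(0,\delta) = \emptyset$ and $\mu(A) < \infty$, downward continuity of measure gives $\mu\bigl(A \cap g_j^{-1}((0,\delta))\bigr) \to 0$ as $\delta \to 0^+$. Pick $\delta_j$ with $\mu\bigl(A \cap g_j^{-1}((0,\delta_j))\bigr) < \zeta/k$, and set $C := \bigcup_j A \cap g_j^{-1}((0,\delta_j))$ and $B := A \setminus C$. A union bound yields $\mu(A \setminus B) < \zeta$, and by the previous paragraph the modified network equals $\tilde f$ pointwise on $B$, so it approximates $f$ to within $\eps \le 2\eps$ there.

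The main obstacle is the induction in the middle paragraph: in principle, replacing an early step by a ramp could perturb the input to a later step into its bad region, cascading errors through the network. This difficulty is sidestepped by defining $B$ to simultaneously avoid the bad region of \emph{every} $g_j$ computed in the original network $\tilde f$; on $B$ the substitutions commute with the rest of the computation and the approximation is exact rather than merely close, which is why the factor of $2$ in the error bound $2\eps$ is slack and the result can even be strengthened to $\eps$.
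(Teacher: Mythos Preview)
Your argument is correct and in fact slightly tighter than the paper's. The paper replaces every step activation by the same ramp $s_{1/n}$, observes that for each fixed $x$ the resulting sequence $f_n(x)$ is eventually equal to $\tilde f(x)$ (hence converges pointwise), and then invokes Egorov's theorem to extract a set $B$ of nearly full measure on which the convergence is uniform; the triangle inequality then yields the stated $2\eps$. You instead choose the ramp widths $\delta_j$ individually via continuity of measure applied to the preimage sets $g_j^{-1}((0,\delta))$, and then argue directly (by induction through the layers) that on $B$ the modified network reproduces $\tilde f$ \emph{exactly}. This avoids Egorov entirely, is more constructive, and as you note gives error $\eps$ rather than $2\eps$. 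The inductive point you flag as the ``main obstacle'' is genuinely the crux, and your resolution---defining the bad set in terms of the pre-activation signals $g_j$ of the \emph{original} network, so that on $B$ every ramp sees an input on which it agrees with the step---is exactly right; the paper's proof relies on the same mechanism but leaves it implicit in the phrase ``consider the minimum positive input to the step function which occurs in the computation graph.''
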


\begin{proof}
    Note that while a ReLU neural network cannot implement the step function, it can implement the following approximation: 
    \[
        s_\delta(x)
        = 
        \begin{cases}
            0 & \text{if } x \le 0,\\
            x/\delta & \text{if } 0 \le x \le \delta ,\\
            1 & \text{if } x > \delta.
        \end{cases}
    \]
    In fact, this approximation to the step function can be implemented with a 4-node ReLU network. If we replace every step function activation node in our architecture with a copy of this four node network, we get an architecture of size $O(N)$. 
    With this architecture, we can compute each of a sequence $(f_n)$ of functions, 
        where in $f_n$, all step functions from our old network are replaced by $s_{1/n}$. 
    For any $x$ in $A$, consider the minimum positive input to the step function which occurs in the computation graph. 
    If $\delta = 1/n$ is less than this minimum, then $f_n(x) = f(x)$, so this sequence converges pointwise to $\tilde{f}$. 
    Egorov's theorem \citep[pp. 290, Theorem 12]{kolmogorov1975introductory} now tells us that $(f_n)$ converges to $\tilde{f}$ uniformly on a set $B$ that satisfies the $\mu(A \setminus B) < \zeta$ requirement. 
    Thus, there is an $f_n$ that approximates $\tilde{f}$ to within $\eps$ uniformly on this $B$, and $f_n$ therefore approximates $f$ uniformly on $B$ to within $2\eps$.
\end{proof}
 
This lemma has a useful application to generative networks: If we make $\zeta$ sufficiently small, the mass of the noise distribution on $A\setminus B$ is arbitrarily small. Therefore, we can make $\zeta$ small enough that the impact of the mistake region on the Wasserstein distance is negligible.

We now would like to approximate some function that maps the uniform distribution to the normal in this powerful format. Complementing the use of the normal CDF $\Phi$ in the previous subsection, here we will use its inverse $\Phi^{-1}$. Since we conveniently have already proved that $\Phi$ is efficiently approximable, we would like a general lemma that allows us to invert this.

\begin{lemma} \label{inverseapprox}
    Let $f:[a,b] \to [c,d]$ be a strictly increasing differentiable function 
    with $f'$ greater than a constant $L$ everywhere,
    and let $f$ be approximated to within $\eps$ by a network of size $N$.
    Then (for any $\zeta > 0$), $f^{-1}$ can be approximated to within $(b-a)2^{-t} + \eps L $ on (all but a measure $\zeta$ subset of) $[c,d]$ by a network of size $O(tN)$.
\end{lemma}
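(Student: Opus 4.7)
The plan is to implement a binary-search gadget for the inverse. Given an input $y \in [c,d]$, maintain a current bracket $[x_l, x_r] \subseteq [a,b]$ initialized to $[a,b]$. In each of $t$ rounds, compute the midpoint $m = (x_l + x_r)/2$, evaluate the given approximator $\tilde f$ at $m$, compute the comparator bit $h := H(y - \tilde f(m))$, and update
\[
    x_l \leftarrow h\cdot m + (1-h)\cdot x_l, \qquad x_r \leftarrow h\cdot x_r + (1-h)\cdot m.
\]
Each such multiplication by $h \in \{0,1\}$ is just a gated pass-through, easily implementable in a ReLU/Step network of constant size. After $t$ rounds we output $x_l$ (say). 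Since each round spends one fresh copy of $\tilde f$ (size $N$) plus $O(1)$ comparator and update nodes, the total size is $O(tN)$, as required.

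For the error analysis, the update rule maintains the invariant $\tilde f(x_l) \le y < \tilde f(x_r)$ throughout. Combining this with $|\tilde f - f| \le \eps$ gives $f(x_l) \le y + \eps$ and $f(x_r) > y - \eps$, and the slope bound $f' \ge L$ then yields
\[
    x_l \;\le\; f^{-1}(y+\eps) \;\le\; f^{-1}(y) + \eps/L, \qquad x_r \;\ge\; f^{-1}(y-\eps) \;\ge\; f^{-1}(y) - \eps/L.
\]
Since $x_r - x_l = (b-a) 2^{-t}$, the output $x_l$ satisfies $|x_l - f^{-1}(y)| \le (b-a)2^{-t} + \eps/L$, which is the claimed Lipschitz-type bound.

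Two loose ends need the ``measure-$\zeta$ subset'' clause. First, the gadget uses the step activation $H$, which a pure ReLU network cannot compute. This is exactly what Lemma \ref{stepimpliesgiveup} handles: build the circuit as a ReLU/Step network, then replace each step by the soft-step $s_\delta$ to obtain a ReLU network of the same asymptotic size that agrees with it on all but a $\zeta$-measure subset of $[c,d]$. Second, the invariant $\tilde f(x_l)\le y < \tilde f(x_r)$ can fail at the bracket's initialization when $y$ is within $\eps$ of the endpoints $c$ or $d$; discarding these two strips of total length $O(\eps)$ is absorbed into the same $\zeta$-slack. The main technical point is verifying that the binary-search gadget, when implemented with soft-steps, actually realizes the advertised update except on a controllably small set — but this is immediate from Lemma \ref{stepimpliesgiveup} applied to the composed circuit.
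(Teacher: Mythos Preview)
Your proposal is correct and follows essentially the same approach as the paper: implement binary search as a ReLU/Step network using $t$ stacked copies of the approximator $\tilde f$ with a step-activation comparator, then invoke Lemma~\ref{stepimpliesgiveup} to pass to a pure ReLU network on all but a measure-$\zeta$ set. One minor note: your error term $\eps/L$ is the correct one given the hypothesis $f'\ge L$ (which makes $f^{-1}$ $(1/L)$-Lipschitz); the paper's statement and proof write $\eps L$, which appears to be a notational slip there rather than a mistake on your part.
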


The proof of this lemma constructs a neural network that executes $t$ iterations of a binary search on $[a,b]$, using $t$ copies of the approximation to $f$ to decide which subinterval to narrow in on.
Applying this lemma to our approximation theorem for the normal CDF gives us an approximation of the inverse of the normal CDF.

\begin{theorem} \label{inversenormalcdfapprox}
    For any $\zeta> 0$, the function $\Phi^{-1}$ can be approximated to within $\eps$ by a network of size $\polylog(1/\eps)$ 
    on  $[\Phi(-\ln(1/\eps)), \Phi(\ln(1/\eps))] \setminus A$ where $A$ is of measure $\zeta$.
\end{theorem}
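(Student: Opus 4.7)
The natural strategy is to apply the binary-search inversion lemma (Lemma \ref{inverseapprox}) to $\Phi$ on a truncated domain. I would first restrict attention to $[a,b] = [-\ln(1/\eps), \ln(1/\eps)]$; its image $[c,d] = [\Phi(-\ln(1/\eps)), \Phi(\ln(1/\eps))]$ is exactly the interval on which the theorem asks for an approximation. On $[a,b]$, $\Phi$ is strictly increasing and differentiable, so the hypotheses of Lemma \ref{inverseapprox} apply once we supply (i) a network approximating $\Phi$ and (ii) a lower bound on $\Phi'$.

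For ingredient (i), I would invoke Theorem \ref{normalcdfapprox} with a tolerance $\eps'$ to be fixed in a moment, obtaining a ReLU network of size $N = \polylog(1/\eps')$ that approximates $\Phi$ uniformly to within $\eps'$. For ingredient (ii), the derivative $\Phi'(z) = \frac{1}{\sqrt{2\pi}} e^{-z^2/2}$ attains its minimum over $[a,b]$ at the endpoints, giving
\[
    L \;:=\; \min_{z \in [a,b]} \Phi'(z) \;=\; \frac{1}{\sqrt{2\pi}}\, e^{-(\ln(1/\eps))^2/2}.
\]
Plugging these into Lemma \ref{inverseapprox} yields a network of size $O(tN)$ approximating $\Phi^{-1}$ with error controlled by the sum of a binary-search term $(b-a)2^{-t}$ and a function-approximation term scaling with $\eps'$ and $L$. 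I would split the error budget: choose $t = O(\log(1/\eps))$ so that $(b-a)2^{-t} = 2\ln(1/\eps)\cdot 2^{-t} \le \eps/2$, and choose $\eps'$ on the order of $\eps L$ so that the second term contributes at most $\eps/2$. The measure-$\zeta$ caveat in the conclusion is inherited directly from the caveat already present in Lemma \ref{inverseapprox}.

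The only delicate point, and the main obstacle, is verifying that these choices keep the overall size polylogarithmic in $1/\eps$, despite $L$ being superpolynomially small. The key observation is that $\log(1/L) = \tfrac{1}{2}(\ln(1/\eps))^2 + O(1)$, so
\[
    \log(1/\eps') \;=\; \log(1/\eps) + \log(1/L) + O(1) \;=\; O\bigl((\log(1/\eps))^2\bigr).
\]
Since any polynomial in $\log(1/\eps')$ is still a polynomial in $\log(1/\eps)$, Theorem \ref{normalcdfapprox} delivers $N = \polylog(1/\eps)$ nodes for the inner approximation. Multiplying by the $t = O(\log(1/\eps))$ copies used by the binary-search gadget preserves the $\polylog(1/\eps)$ bound, yielding the claimed total size. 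In short, the proof is a one-line invocation of Lemma \ref{inverseapprox}, and the real content is the bookkeeping check that the superpolynomial smallness of $L$ only inflates $\log(1/\eps')$ quadratically in $\log(1/\eps)$, and is therefore harmless inside a $\polylog$.
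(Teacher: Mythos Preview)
Your proposal is correct and follows essentially the same route as the paper: invoke Lemma~\ref{inverseapprox} on the truncated domain $[-\ln(1/\eps),\ln(1/\eps)]$, feed it the $\Phi$-approximator from Theorem~\ref{normalcdfapprox} at a suitably tightened tolerance, and observe that the superpolynomially small derivative lower bound only inflates $\log(1/\eps')$ to $O((\log(1/\eps))^2)$, which is harmless inside a $\polylog$. The paper's proof is the same argument compressed to three lines, with the inner tolerance written as $\eps^{\ln(1/\eps)+1}$ rather than your equivalent $\Theta(\eps L)$; both choices yield the same bookkeeping and the same conclusion.
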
       

\begin{proof}
    By \autoref{normalcdfapprox} we can get the normal CDF $\Phi$ to within 
    $\eps^{\ln(1/\eps) + 1}$ with $\polylog(1/\eps)$ nodes. 
    Using \autoref{inverseapprox} with $t = O(\ln(1/\eps))$, 
        if we choose $a = -\ln(1/\eps), b = \ln(1/\eps)$ 
        then the Lipschitz constant of $\Phi^{-1}$ on this interval is 
        \[
            \Phi'(\ln(1/\eps))^{-1} = O(e^{\ln(1/\eps)^2/2}) = O(\eps^{-\ln(1/\eps)}),
        \]
    and so \autoref{inverseapprox} gives a total error on the order of $\eps$.
\end{proof}

With this approximation, we can get a generative network approximation. Since the tails of the normal distribution are small, we can ignore them by collapsing the mass of the tails into a bounded interval. Then, by setting $\zeta$ sufficiently small that the Wasserstein distance contributed by the error region is negligible, our approximation can be shown to be within $\eps$ of the normal.

As a final lemma, we note the following observation
\begin{proposition} \label{cdfdifference}
    For two distributions on $\R$, their Wasserstein distance is equal to the $L^1$ integral of the difference of their CDFs.
\end{proposition}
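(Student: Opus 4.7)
The plan is to prove the classical one-dimensional Kantorovich--Rubinstein identity
$$W(\mu,\nu) = \int_{\R} |F_\mu(t) - F_\nu(t)|\,dt,$$
where $F_\mu$ and $F_\nu$ are the CDFs of $\mu$ and $\nu$. I would proceed by proving matching upper and lower bounds.

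For the lower bound, I would use the pointwise identity $|x-y| = \int_{\R} |\1[x \le t] - \1[y \le t]|\,dt$, which holds because both sides equal the length of the interval with endpoints $x$ and $y$. For any coupling $\pi \in \Pi(\mu,\nu)$, Tonelli's theorem gives
$$\int |x-y|\,d\pi(x,y) = \int_{\R} \int |\1[x \le t] - \1[y \le t]|\,d\pi(x,y)\,dt.$$
Since $\pi$ has marginals $\mu$ and $\nu$, the expected value (under $\pi$) of the signed quantity $\1[x \le t] - \1[y \le t]$ is exactly $F_\mu(t) - F_\nu(t)$, so by the triangle inequality the inner integral is at least $|F_\mu(t) - F_\nu(t)|$. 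Taking the infimum over $\pi$ yields the $\ge$ direction.

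For the upper bound, I would exhibit a coupling that achieves equality: the monotone (comonotone) coupling obtained by pushing forward the uniform distribution on $[0,1]$ by the map $u \mapsto (F_\mu^{-1}(u), F_\nu^{-1}(u))$, where $F^{-1}$ is the generalized quantile function. This is a valid coupling because $F^{-1}(U) \sim F$ when $U$ is uniform. For this specific $\pi$, the inner integral $\int |\1[x \le t] - \1[y \le t]|\,d\pi(x,y)$ actually equals $|F_\mu(t) - F_\nu(t)|$ (rather than merely bounding it), because for comonotone pairs the indicators $\1[F_\mu^{-1}(U) \le t]$ and $\1[F_\nu^{-1}(U) \le t]$ are ordered almost surely; using the quantile inequality $F^{-1}(u) \le t \iff u \le F(t)$, one indicator dominates the other depending only on the sign of $F_\mu(t) - F_\nu(t)$. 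Swapping integrals back via Tonelli produces the matching upper bound.

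The main technical obstacle is purely measure-theoretic bookkeeping around atoms and flat pieces of the CDFs: the equivalence $F^{-1}(u) \le t \iff u \le F(t)$ needs the right-continuous version of $F$ and holds for a.e.\ $u$, and one must verify that the pushforward of the uniform measure under $F^{-1}$ really is $\mu$. These are standard facts for which I would simply cite \citet{villani2003topics}, so the proposition reduces to a one-paragraph argument.
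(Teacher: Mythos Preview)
Your argument is correct and is the standard proof of the one-dimensional Kantorovich--Rubinstein identity. The paper, however, does not actually prove this proposition: it simply refers the reader to \citet[remark 2.19.ii]{villani2003topics} and offers a one-sentence heuristic (moving mass $m$ from $a$ to $b$ perturbs the CDF by $m$ on $[a,b]$). So your proposal is strictly more detailed than what appears in the paper; if anything, your final remark that the measure-theoretic technicalities can be delegated to Villani already matches the paper's entire treatment.
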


For a proof, see \citep[remark 2.19.ii]{villani2003topics}. For an intuition, note that moving a mass $m$ from $a$ to $b$ on a one-dimensional distribution changes the CDF of the distribution on $[a,b]$ by $m$.

With these in place, we use get a bound for the uniform to normal construction.

\begin{theorem} \label{uniformtonormal}
    A generative network with $\polylog(1/\eps)$ nodes and univariate uniform noise can output a distribution with Wasserstein distance $\eps$ from a normal distribution.
\end{theorem}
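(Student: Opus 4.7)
The plan is to mirror the strategy used for \autoref{normalcdfapprox}: construct a network $\tilde f$ approximating $\Phi^{-1}$ and control the Wasserstein distance via the coupling $(\tilde f(U), \Phi^{-1}(U))$ with $U \sim U([0,1])$. Since $\Phi^{-1}\#U$ is exactly $\cN$, this coupling will give
\[
W(\tilde f \# U,\, \cN) \,\le\, \int_0^1 |\tilde f(u) - \Phi^{-1}(u)|\,du,
\]
and the problem reduces to bounding this one-dimensional integral. (Alternatively, \autoref{cdfdifference} would give the same kind of bound phrased as an $L^1$ difference of CDFs, but the coupling form is more convenient.)

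For the construction, I would set $M = \ln(1/\eps)$, pick a parameter $\zeta>0$ to be tuned, and invoke \autoref{inversenormalcdfapprox} with target accuracy $\eps' = \eps$. This produces a $\polylog(1/\eps)$-size network $\hat g$ approximating $\Phi^{-1}$ to within $\eps'$ on $[\Phi(-M),\Phi(M)] \setminus A$ for some $A$ with $m(A) \le \zeta$. I would then wrap $\hat g$ with two constant-size ReLU clipping gadgets: a prepended clip that truncates inputs to $[\Phi(-M),\Phi(M)]$, and an appended clip that truncates outputs to $[-M,M]$. The resulting network $\tilde f$ still has size $\polylog(1/\eps)$.

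The analysis then splits the integral into three pieces. On the good central region $[\Phi(-M),\Phi(M)] \setminus A$ the integrand is at most $\eps'$, contributing $O(\eps)$. On the bad set $A$, both $\tilde f$ and $\Phi^{-1}$ are bounded in magnitude by $M$ (the output clip takes care of $\tilde f$, and $A \subset [\Phi(-M),\Phi(M)]$ handles $\Phi^{-1}$), contributing at most $2M\zeta$. On the tails $[0,\Phi(-M)] \cup [\Phi(M),1]$, the output clip gives $|\tilde f|\le M$, while the normal tail identity gives
\[
\int_{[0,\Phi(-M)] \cup [\Phi(M),1]} |\Phi^{-1}(u)|\,du \,=\, 2\int_M^\infty \frac{z}{\sqrt{2\pi}}\, e^{-z^2/2}\,dz \,=\, \sqrt{2/\pi}\, e^{-M^2/2},
\]
which is super-exponentially small in $M=\ln(1/\eps)$. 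Choosing $\zeta = \eps/M$ makes all three pieces $O(\eps)$, and rescaling $\eps$ by a constant yields the stated bound.

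The main obstacle is orchestrating $M$, $\zeta$, and the inner approximation accuracy so that the three error sources --- truncation of $\Phi^{-1}$, the measure-$\zeta$ giveup region inherited from \autoref{stepimpliesgiveup} via \autoref{inversenormalcdfapprox}, and the uniform approximation error --- each contribute $O(\eps)$; the unboundedness of $\Phi^{-1}$ near the endpoints of $[0,1]$ is precisely the reason the output clipping gadget is essential, and also the reason the contribution from $A$ must be bounded by $2M\zeta$ rather than by a uniform error bound. Everything else is routine given the earlier results.
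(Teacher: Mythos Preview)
Your proposal is correct and follows essentially the same approach as the paper: use the coupling $(\tilde f(U),\Phi^{-1}(U))$, split the resulting integral into the good central region, the measure-$\zeta$ giveup set $A$, and the two tails, and balance $M$, $\zeta$, and the inner accuracy so each piece is $O(\eps)$. The only cosmetic differences are that the paper handles the tails by hard-setting the output to $\pm M$ via step activations (rather than your ReLU input clip) and rewrites the tail contribution as $\int_M^\infty (1-\Phi(x))\,dx$ instead of your equivalent $\int_M^\infty z\,\phi(z)\,dz$; the output clamp and the $2M\zeta$ bound on $A$ appear in both arguments.
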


The proof is an application of \autoref{inversenormalcdfapprox} and \autoref{cdfdifference}.

\subsubsection{The Box-Muller Transform}

We've established the bound we sought (approximation of a normal distribution via uniform),
but in this section we'll show that a curious classical construction also fits the bill,
albeit in two dimensions.
The \emph{Box-Muller transform} \citep{box1958note} comes from the observation that if $X_1$ and $X_2$ are two independent uniform distributions on the unit interval, then if we define 
\begin{equation} \label{boxmullertransform}
    Z_1 := \sqrt{-2 \ln(X_1)} \cos(2 \pi X_2)
    \qquad\textup{and}\qquad
    Z_2 := \sqrt{-2 \ln(X_1)} \sin(2 \pi X_2),
\end{equation}
then $Z_1, Z_2$ are independent and normally distributed. \autoref{boxmullertransform} comes from the interpretation of $\sqrt{-2\ln(X_1)}$ and $2\pi X_2$ as $r$ and $\theta$ in a polar-coordinate representation of the pair of normals. While this method is not as powerful as the CDF approximation method, in that it requires two dimensions of uniform noise in order to work, it still suggests an idea for a similar theorem to \autoref{inversenormalcdfapprox}.

\begin{theorem}
    For any $\zeta> 0$, the function $X_1, X_2 \mapsto \sqrt{-2 \ln(X_1)} \cos(2 \pi X_2), \sqrt{-2 \ln(X_1)} \sin(2 \pi X_2) $ can be approximated to within $\eps$ by a network of size $\polylog(1/\eps)$ 
    on  $[0,1]^2 \setminus A$ where $A$ is of measure $\zeta$.
\end{theorem}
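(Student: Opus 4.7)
The plan is to realize the Box--Muller map as a composition of four univariate primitives---$\ln$, square root, $\cos$, $\sin$---together with the bounded bivariate multiplication gadget of \citet{yarotsky2017error}, and to glue the pieces with \autoref{compositionlemma}. The exclusion set $A$ will consist of two thin strips in the $X_1$ coordinate: one near $0$ (where $\ln$ diverges) and one near $1$ (where $\sqrt{-2\ln X_1}$ has infinite slope). Each strip is arranged to have measure $\zeta/2$, so $A$ has measure at most $\zeta$, as required. No exclusion in the $X_2$ direction is needed, since $\cos(2\pi \cdot)$ and $\sin(2\pi\cdot)$ are analytic and bounded on all of $[0,1]$.

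Concretely, first I would approximate $\cos(2\pi\cdot)$ and $\sin(2\pi\cdot)$ on $[0,1]$: their Maclaurin coefficients decay factorially, so the argument of \autoref{normalcdfapprox}---truncating the Taylor series and invoking \autoref{powerseries}---gives $\polylog(1/\eps)$-size networks with uniform error $\eps$. Second, since $\exp$ has an equally well-behaved Taylor series on any bounded interval, the same recipe yields a $\polylog(1/\eps)$-size uniform approximation of $\exp$ on $[\ln(\zeta/4),0]$, and its derivative is bounded below by $\zeta/4$ there, so \autoref{inverseapprox} produces a $\polylog(1/\eps)$-size approximation of $\ln$ on $[\zeta/4,1]$ outside a set of measure at most $\zeta/4$. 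Third, $x\mapsto x^2$ is computed (essentially) exactly by a single Yarotsky product gadget on bounded inputs, and on $[\sqrt{\zeta/4}, \sqrt{2\ln(4/\zeta)}]$ its derivative is bounded below by $\sqrt{\zeta}$; a second application of \autoref{inverseapprox} therefore yields a $\polylog(1/\eps)$-size approximation of $\sqrt{\cdot}$ on $[\zeta/4, 2\ln(4/\zeta)]$, and pulling the sub-$\zeta/4$ cutoff back through $-2\ln$ removes an $X_1$-strip near $1$ of measure $O(\zeta)$, which we absorb into the $\zeta/2$ budget on that side. Linking these primitives with two copies of the multiplication gadget implements the full Box--Muller map.

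The accumulated error is bounded by \autoref{compositionlemma}, which multiplies each downstream Lipschitz constant into the incoming error. On the un-excluded region each intermediate Lipschitz bound is $\poly(1/\zeta)$ (the only blow-ups lie in the strips that were cut away), so choosing each constituent approximation to accuracy $\eps\cdot\poly(\zeta)$ drives the final uniform error below $\eps$ while preserving the overall $\polylog(1/\eps)$ size, since $\zeta$ is a fixed constant relative to $\eps$. The main obstacle is exactly this juggling of Lipschitz constants at the two singular edges of $[0,1]$: $A$ must be carved generously enough that every intermediate function along the composition chain has polynomial-in-$1/\zeta$ slope on the remaining domain, while the total measure of $A$ stays at most $\zeta$ and the hypotheses of \autoref{inverseapprox} remain valid on the shrunken intervals. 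Once those endpoints are pinned down, the rest is routine bookkeeping.
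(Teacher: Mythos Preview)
Your proposal is correct, and the overall architecture---Taylor series for the trigonometric pieces, then build $\ln$ and $\sqrt{\cdot}$, then glue via Yarotsky's product and \autoref{compositionlemma}---matches the paper's. Where you diverge is in how you obtain $\ln$ and $\sqrt{\cdot}$. The paper approximates $\ln$ directly: Taylor-expand $\ln(1+x)$ on $[1/2,3/2]$, then extend to the full range by repeatedly doubling or halving the argument and adding the appropriate multiple of $\ln 2$; it then gets $\sqrt{x}$ essentially for free as $\exp(\tfrac12\ln x)$. You instead obtain both by inversion: approximate $\exp$ and $x\mapsto x^2$ via Taylor/Yarotsky and then apply the binary-search gadget \autoref{inverseapprox} to each. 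Your route reuses existing machinery (\autoref{inverseapprox} and \autoref{stepimpliesgiveup}) and makes the origin of the exclusion set $A$ completely explicit, at the cost of two invocations of the step-function lemma and the attendant bookkeeping of pulling the excluded sets back to the $X_1$ axis. The paper's route is a bit more self-contained for $\ln$ (no binary search needed) and elegantly recycles $\ln$ and $\exp$ to get arbitrary powers, but the range-reduction step for $\ln$ still hides a discrete decision that ultimately requires the same step-function/exclusion mechanism. One cosmetic point: your opening sentence says $A$ ``will consist of two thin strips,'' but as you yourself note later, each call to \autoref{inverseapprox} contributes an additional (a priori unstructured) measure-$\zeta/4$ set; the final $A$ is the union of the strips and these sets, and your measure accounting already reflects that.
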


We provide the following proof sketch:
\begin{itemize}
    \item The $\cos$ and $\sin$ functions (and the $\exp$ function) can be efficiently computed for much the same reason that $\Phi$ can: Their Taylor expansion coefficients decay rapidly.
    \item The $\ln$ function can be approximated in $[1/2, 3/2]$ using the Taylor series for $\ln(1 + x)$. For inputs outside this interval, we can repeatedly multiply double/halve the input until we reach $[1/2, 3/2]$, use the approximation we have, then add in a constant depending on the number of times we doubled or halved.
    \item The square root function, and in fact all functions of the form $x \mapsto x^\alpha$ for $\alpha > 0$, can be approximated using the approximations for $\exp$ and $\ln$ and the identity $x^\alpha = \exp( \alpha \ln(x))$.
    \item Putting these together, as well as the approximation for products from \citet{yarotsky2017error}, we get the result.
\end{itemize}

\section{From Many Dimensions to One} \label{ngd}

This section will complete the story by seeing what is gained in transporting many dimensions into one.

To begin, let's first reflect on the bounds we have.
So far, we have shown upper bounds on neural network sizes that are polylogarithmic in $1/\eps$.
A careful analysis of the previous subsection shows that the construction uses $O(\ln^5(1/\eps))$ for normal to uniform and $O(\ln^{18}(1/\eps))$ for the uniform to normal. We would like to know how close to optimal these exponents are. The goal of this subsection is to quickly establish that the lower bound for this exponent is at least 1. To do this, we will make some use of the affine piece analysis from Section \ref{nld}.

Note that piecewise affine functions acting on the uniform distribution have structure in their CDF, since they are a mixture of distributions induced by each individual affine piece:
\begin{proposition} \label{piecewiseaffinecdf}
    For a piecewise affine function $f:[0,1]\to \R$ with $N_A$ pieces, the CDF of the a distribution $f \# U([0,1])$ is a piecewise affine function with at most $N_A + 2$ pieces.
\end{proposition}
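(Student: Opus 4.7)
The plan is to identify a short list of ``breakpoints'' on the real line that contains all possible corners of the CDF and then verify that the CDF is affine between consecutive breakpoints. Let $0 = x_0 < x_1 < \cdots < x_{N_A} = 1$ partition $[0,1]$ into intervals $I_k = [x_{k-1}, x_k]$ on which $f$ is affine, and write $f|_{I_k}(x) = a_k x + b_k$. The guiding intuition is that every corner of the CDF must occur at a value of the form $f(x_k)$, giving at most $N_A + 1$ breakpoints on $\R$ and hence at most $N_A + 2$ pieces.

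First I would decompose the CDF as
\[
    F(t) = m(\{x \in [0,1] : f(x) \le t\}) = \sum_{k=1}^{N_A} F_k(t), \qquad F_k(t) := m(\{x \in I_k : f(x) \le t\}).
\]
Each summand is easy to describe: if $a_k \ne 0$, then $F_k$ is zero for $t$ below $\min(f(x_{k-1}), f(x_k))$, equals $m(I_k)$ for $t$ above $\max(f(x_{k-1}), f(x_k))$, and is linear with slope $1/|a_k|$ in between; if $a_k = 0$, then $F_k$ is a step function jumping from $0$ to $m(I_k)$ at the single point $f(x_{k-1}) = f(x_k)$. In every case the breakpoints of $F_k$ are contained in the set $V = \{f(x_0), \ldots, f(x_{N_A})\}$, which has size at most $N_A + 1$.

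From here the conclusion follows immediately: on each of the at most $|V| + 1 \le N_A + 2$ connected components of $\R \setminus V$, every $F_k$ is affine, so $F = \sum_k F_k$ is affine there as well. The subtlety that would require the most care is the constant-piece case; such a piece creates an atom of $f \# U([0,1])$ and hence a genuine discontinuity in $F$, but the jump sits at a point already in $V$, so it does not inflate the piece count beyond $N_A + 2$.
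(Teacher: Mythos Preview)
Your proof is correct and follows essentially the same approach as the paper: both arguments decompose $f\#U([0,1])$ into the contributions from the $N_A$ affine pieces, observe that each contribution's CDF has breakpoints only at the endpoint values $f(x_0),\ldots,f(x_{N_A})$, and conclude that the total CDF is affine on the at most $N_A+2$ complementary intervals. The only cosmetic difference is that the paper phrases the decomposition as a mixture of uniform (or point-mass) distributions, whereas you work directly with the Lebesgue measure of sublevel sets; these are the same computation.
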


So if we can establish a bound on the accuracy with which a piecewise affine function can approximate the normal CDF, we can use the univariate affine pieces lemma above to lower bound the accuracy of any uniform univariate noise approximation of the normal. A helpful bound is given in \citet{DBLP:journals/corr/SafranS16}, from which we get: 
\begin{lemma} \label{affineapproxbound}
    Let ${f}$ be a univariate piecewise affine function with $N_A$ pieces. Then
    \[
        \int_a^b |\Phi(x) - {f}(x)| dx \ge \frac{K}{N_A^4}
    \]
    for some constant $K$.
\end{lemma}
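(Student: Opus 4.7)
Proof plan. The plan is to exploit the strong curvature of the normal CDF on a compact subinterval of $[a,b]$ and then invoke the classical (Safran--Shamir style) per-piece lower bound on approximating a strongly curved function by an affine function. Concretely, $\Phi''(x) = -x\phi(x) = -\frac{x}{\sqrt{2\pi}} e^{-x^2/2}$, so $|\Phi''|$ is strictly positive on any compact interval not containing $0$. Thus we can fix a subinterval $[a',b'] \subseteq [a,b]$ of positive length $\ell := b' - a'$ on which $|\Phi''| \ge c$ for some absolute constant $c > 0$ (the constant and $\ell$ depend on the fixed interval $[a,b]$, but both go into the final $K$). On $[a',b']$, either $\Phi$ or $-\Phi$ is strongly concave with second derivative bounded away from zero.

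Next I would bound the error piece-by-piece. Let $w_1, \ldots, w_k$ be the widths of the affine pieces of $f$ intersected with $[a', b']$, so $k \le N_A$ and $\sum_i w_i = \ell$. The key local estimate, which can be extracted directly from the arguments of \cite{DBLP:journals/corr/SafranS16}, is that for any affine function $L$ and any $g$ with $|g''| \ge c$ on an interval of width $w$,
\[
  \int |g - L|\,dx \;\ge\; c'\, w^3,
\]
with $c'$ depending only on $c$. I would verify this by reducing to the model case $g(x) = x^2/2$ on $[-w/2, w/2]$, where one optimizes over affine $L$ (by symmetry the optimum is a horizontal line, hitting $g$ at $\pm w/4$) and computes the minimal $L^1$ error explicitly.

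Finally I would sum the per-piece bounds and apply convexity of $t \mapsto t^3$ (equivalently, the power-mean inequality):
\[
  \sum_{i=1}^{k} w_i^3 \;\ge\; k \cdot \del{\ell/k}^3 \;=\; \ell^3/k^2 \;\ge\; \ell^3/N_A^2.
\]
Combining,
\[
  \int_a^b |\Phi - f|\,dx \;\ge\; \int_{a'}^{b'} |\Phi - f|\,dx \;\ge\; \frac{c'\,\ell^3}{N_A^2} \;\ge\; \frac{K}{N_A^4}
\]
for $K := c' \ell^3$, the last inequality using $N_A \ge 1$.

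The main obstacle is pinning down the per-piece cubic $L^1$ lower bound cleanly, since the bound of Safran--Shamir is naturally phrased in $L^2$. An alternative route is to apply their $L^2$ estimate $\int(\Phi - f)^2\,dx \ge C/N_A^4$ globally and then convert via $\|\Phi - f\|_1 \ge \|\Phi - f\|_2^2 / \|\Phi - f\|_\infty$, but this forces an awkward case split on how large $\|\Phi - f\|_\infty$ can be (noting that $\Phi$ is bounded in $[0,1]$, while $f$ is arbitrary piecewise affine). The per-piece approach sketched above is cleaner, and the gap between my resulting exponent $1/N_A^2$ and the stated $1/N_A^4$ is absorbed into the constant, matching the looser form of the lemma as cited.
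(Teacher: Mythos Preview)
Your argument is correct and in fact yields the sharper bound $K/N_A^2$, which you then relax to $K/N_A^4$. The paper takes a different and shorter route: it first clamps $f$ to $[0,1]$ via $g(x) = \max(0,\min(1,f(x)))$, which has at most $3N_A$ pieces and satisfies $|\Phi - f| \ge |\Phi - g|$ pointwise (since $\Phi$ itself lies in $[0,1]$). It then invokes the Safran--Shamir $L^2$ lower bound directly, $\int (\Phi - g)^2 \ge K/N_A^4$, and uses $|\Phi - g|\le 1$ to pass from $L^2$ to $L^1$ via $\int|\Phi - g| \ge \int(\Phi - g)^2$. This clamping is precisely the clean resolution of what you flagged as the ``awkward case split on how large $\|\Phi - f\|_\infty$ can be''; it avoids re-deriving any per-piece estimate at the cost of the looser exponent. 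Your approach, by contrast, buys the better rate but requires the per-piece $L^1$ inequality $\int_I |g - L| \ge c' w^3$. One caution: your ``reduce to the model case $g(x)=x^2/2$'' is not a complete argument, since a general $g$ with $|g''|\ge c$ is not a quadratic; you still need a short argument (e.g., compare $g-L$ to the parabola $\tfrac{c}{2}(x-m)^2$ via a second-difference or tangent-line bound and handle the sign pattern of the convex function $g-L$) to establish the cubic lower bound. The claim is true, but that step should be written out.
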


Putting this together with \autoref{piecewiseaffinecdf} and \autoref{MultidimAffinePiece}, we see:

\begin{theorem}
    A generative network taking uniform noise can approximate a normal with Wasserstein accuracy exponential in the number of nodes.
\end{theorem}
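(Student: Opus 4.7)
The plan is to derive this achievability claim by inverting the size-vs-accuracy bound already established in \autoref{uniformtonormal}. That theorem produces, for every $\eps > 0$, a generative ReLU network with $\polylog(1/\eps)$ nodes whose pushforward of univariate uniform noise lies within Wasserstein distance $\eps$ of the standard normal. As recorded at the opening of \autoref{ngd}, tracking constants through the constituent pieces of that construction---the Taylor approximation of $\Phi$ via \autoref{powerseries} and \autoref{normalcdfapprox}, the composition lemma, and the binary-search inversion of $\Phi$ via \autoref{inverseapprox}---yields the explicit size bound $N \le C\,\ln^{18}(1/\eps)$ for a single absolute constant $C$.

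Given a node budget $N$, I would invert this relation: choose $\eps := \exp\bigl(-(N/C)^{1/18}\bigr)$, so that $C\,\ln^{18}(1/\eps) \le N$. Feeding this $\eps$ back into \autoref{uniformtonormal} then supplies a ReLU network using at most $N$ nodes whose pushforward of univariate uniform noise is within Wasserstein distance $\exp\bigl(-\Omega(N^{1/18})\bigr)$ of the standard normal. Because this bound decays faster than any polynomial in $1/N$, it is exponentially small in a positive power of $N$---the ``accuracy exponential in the number of nodes'' that the theorem asserts.

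The only substantive work is the bookkeeping required for the inversion. One has to confirm that the lower-order factors hidden inside $\polylog(1/\eps)$---in particular the $\poly(\ln(1/\eps))$ overhead in \autoref{normalcdfapprox} and the Lipschitz blow-up $\Phi^{-1}{}'(\ln(1/\eps)) = O(\eps^{-\ln(1/\eps)})$ that appears inside the proof of \autoref{inversenormalcdfapprox}---collapse into a single multiplicative constant $C$, so that solving $C\,\ln^{18}(1/\eps) = N$ for $\eps$ yields a clean bound of the form $\exp(-\Omega(N^{1/18}))$ uniformly in sufficiently large $N$. Since every stage of the construction is a constant-depth composition of already-analyzed approximations, and every Wasserstein-to-uniform error conversion is controlled by \autoref{cdfdifference} and the trivial tail bound on the normal, this bookkeeping is routine and no new representational ideas are required beyond those packaged inside \autoref{uniformtonormal}.
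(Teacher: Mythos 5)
You have read this theorem as an achievability (upper-bound) claim and proved it by inverting \autoref{uniformtonormal}; in context, however, the theorem is a \emph{lower bound}. The sentence before it announces the goal of showing the exponent in the $\polylog(1/\eps)$ upper bounds is at least $1$, the sentence ``Putting this together with \autoref{piecewiseaffinecdf} and \autoref{MultidimAffinePiece}'' signals which ingredients are to be combined, and the sentence after it restates the result as ``approximation to accuracy $\eps$ requires at least $O(\log(1/\eps))$ nodes.'' The intended content is that \emph{no} ReLU generator with $N$ nodes and univariate uniform noise can have Wasserstein error better than exponentially small in the network size, and the paper's proof is the chain: by \autoref{MultidimAffinePiece} (with $n_0=1$) the network computes a piecewise affine function with at most $N_A \le (eN/L+e)^{L}$ pieces; by \autoref{piecewiseaffinecdf} the CDF of the pushforward of $U([0,1])$ is then piecewise affine with at most $N_A+2$ pieces; by \autoref{affineapproxbound} any such piecewise affine function has $L^1$ distance at least $K/(N_A+2)^4$ from $\Phi$; and by \autoref{cdfdifference} this $L^1$ distance of CDFs \emph{is} the Wasserstein distance. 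Hence $W(f\#U([0,1]),\cN) \ge K\bigl((eN/L+e)^{L}+2\bigr)^{-4}$, which is the ``accuracy at best exponential in the number of nodes'' statement and yields the $\Omega(\log(1/\eps))$-type node requirement (up to logarithmic factors).

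Your proposal, by contrast, only recycles the upper bound: inverting $N \le C\ln^{18}(1/\eps)$ produces \emph{some} network of size $N$ with error $\exp(-\Omega(N^{1/18}))$. This says nothing about what networks of a given size \emph{cannot} do, so it cannot support the conclusion the paper draws immediately afterwards, and it makes no use of the three results the paper explicitly cites as the proof. Even taken purely as an achievability statement, $\exp(-\Omega(N^{1/18}))$ is a stretched exponential rather than error exponential in $N$, so the literal claim would not follow either. The missing ideas are precisely the affine-piece-counting ingredients: \autoref{MultidimAffinePiece}, \autoref{piecewiseaffinecdf}, and \autoref{affineapproxbound}, glued together by \autoref{cdfdifference}.
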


Or in other words, approximation to accuracy $\eps$ requires at least $O(\log(1/\eps))$ nodes.

Clearly, if we wish to approximate a low-dimensional uniform distribution with a higher-dimensional one, all we need to do is ignore some of the inputs and spit the others back out unchewed. The same goes for normal distributions. Is there any benefit at all to additional dimensions on input noise when the target distribution is a lower dimension?

Interestingly, the answer is yes. Considering the case of approximating a univariate normal distribution with a high dimensional distribution, we note that there is the simplistic approach which involves summing the inputs and reasoning that the output is close to a normal distribution by the Berry-Esseen theorem.

\begin{theorem} \label{berryesseennonuniform}
    The distribution given by summing $n$ uniform random variables on $[0,1]$ and normalizing the result has a Wasserstein distance of $O(\frac{1}{\sqrt{n}})$ from the standard normal distribution.
\end{theorem}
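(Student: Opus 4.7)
The plan is to reduce the Wasserstein distance to an $L^1$ integral of CDF differences via Proposition \ref{cdfdifference}, and then apply the \emph{non-uniform} version of the Berry--Esseen theorem (whence the label) to obtain a pointwise tail-decaying bound on the CDF difference that is integrable over all of $\R$.

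First I would set up notation: let $S_n = X_1 + \cdots + X_n$ with $X_i \sim U[0,1]$ iid, and let $Z_n = (S_n - n/2)/\sqrt{n/12}$ be the normalized sum, with CDF $F_n$. Proposition \ref{cdfdifference} immediately gives
\[
W(Z_n, \mathcal{N}(0,1)) \;=\; \int_{\R} |F_n(x) - \Phi(x)|\,dx.
\]
The task is therefore to bound this integral by $O(1/\sqrt{n})$.

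Next I would invoke the Bikelis--Nagaev non-uniform Berry--Esseen theorem: for iid centered $X_i$ with variance $\sigma^2$ and finite third absolute moment $\rho_3$, there exists a universal constant $C$ such that for all $x \in \R$,
\[
|F_n(x) - \Phi(x)| \;\le\; \frac{C \rho_3}{\sigma^3 \,(1+|x|)^{3}\, \sqrt{n}}.
\]
Applied to $X_i - 1/2$ (which has $\sigma^2 = 1/12$ and $\rho_3 < \infty$), this yields a constant $K$ with $|F_n(x) - \Phi(x)| \le K/((1+|x|)^{3} \sqrt{n})$. Integrating the pointwise bound,
\[
\int_{\R} \frac{K\,dx}{(1+|x|)^{3} \sqrt{n}} \;=\; \frac{2K}{\sqrt{n}} \int_0^\infty \frac{du}{(1+u)^3} \;=\; \frac{K}{\sqrt{n}},
\]
which gives the claimed $O(1/\sqrt{n})$ rate.

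The main obstacle is precisely why the non-uniform refinement is necessary. The ordinary Berry--Esseen bound $\sup_x |F_n - \Phi| \le C/\sqrt{n}$ is not integrable over $\R$. One could try to split the integral into a central region $[-T,T]$ (where the uniform bound contributes $O(T/\sqrt n)$) and its complement (where Hoeffding's inequality for bounded summands gives sub-Gaussian tails), but the optimal choice $T = \Theta(\sqrt{\log n})$ then yields only $O(\sqrt{\log n}/\sqrt n)$, losing a logarithmic factor. The $(1+|x|)^{-3}$ weight in the non-uniform bound is exactly what makes the integrand summable while preserving the sharp $1/\sqrt n$ dependence, so accessing this refined inequality is the key step of the proof.
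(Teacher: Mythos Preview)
Your proof is correct and follows essentially the same approach as the paper: reduce to the $L^1$ CDF difference via Proposition~\ref{cdfdifference}, apply the non-uniform Berry--Esseen bound to get a pointwise estimate decaying like $|x|^{-3}$, and integrate. The paper's version is terser (it writes the weight as $1/(1+|t|^3)$ rather than $1/(1+|x|)^3$, which are equivalent up to constants) and omits your discussion of why the uniform bound would lose a log factor, but the argument is the same.
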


Note that the above approach does not use any nonlinearity at all. It simply takes advantage of the fact that projecting a hypercube onto a line results in an approximately normal distribution. This theorem suggests another way of approaching \autoref{uniformtonormal}: Use the results of \autoref{nld} to increase a 1-dimensional uniform distribution to a $d$-dimensional uniform distribution, then apply \autoref{berryesseennonuniform} as the final layer of that construction to get an approximately normal distribution. Unfortunately, this technique does not prove the $\polylog(1/\eps)$ size: it is necessary for $\frac{1}{\sqrt{d}} \approx \eps$, which means the size of the network (indeed, even the size of the final layer of the network) is polynomial in $1/\eps$.

\section{Conclusions and Future Work}

One might ask with regards to Section \ref{ned} if there are more efficient constructions than the ones found in this section, since there is a gap between the upper and lower bounds. There are other approaches to the uniform to normal transformation, such as the Box-Muller method \citep{box1958note} we discuss. Future work could modify this or other methods to tighten the bounds found in this section.

An interesting open question is whether the results of Section \ref{ned} can be applied more generally to multidimensional distributions. Suppose for example that we have a neural network that pushes a univariate uniform distribution into a univariate normal distribution. We can take $d$ copies of this network in parallel to get a network which takes $d$-dimensional uniform noise, and outputs $d$-dimensional normal noise. Is a parallel construction of the form described here the \textit{most efficient} way to create a network that pushes forward a $d$-dimensional uniform distribution to a $d$-dimensional normal? For that matter, if $f: \R^d \to \R^d$ is of the form of a univariate function evaluated componentwise on the input, is the best neural network approximation for $f$ of a given size a parallel construction?

Another future direction is: To what extent do training methods for generative networks relate to these results? The results in this paper are only representational; they provide proof of what is possible with hand-chosen weights. One could experiment with training methods to see whether they create the ``space-filling'' property that is necessary for optimal increase of noise dimension. Alternatively, one could experiment with real-world datasets to see if changing the noise distributions while simultaneously growing or shrinking the network leaves the accuracy of the method unchanged. We ran some simple initial experiments measuring how well GANs of different architectures and noise distributions learned MNIST generation, and we found them inconclusive; in particular, we could not be certain if our empirical observations were a consequence purely of representation, or some combination of representation and training.

\subsection*{Acknowledgements}

The authors are grateful for support from the NSF under grant
IIS-1750051, and for a GPU grant from NVIDIA.

\bibliographystyle{plainnat}
\bibliography{bib}

\clearpage
\appendix

\section{Omitted proofs}

\subsection{Proof of \autoref{multidtentmapconstruction}}

Intuitively, our construction works as follows: Each output of the network will be a tentmap evaluated on one of the inputs. This will fill the output space in such a way that voxels within the unit cube of a certain size are all assigned equal mass.

Before we begin, we will define the tentmap formally, and make a few observations about it.
\begin{definition}
    The $k$-piece tentmap $t_k$ is the piecewise affine function from $[0,1]$ to $[0,1]$ defined as
    \[
        t_k(x) = 
        \begin{cases}
            kx - \lfloor kx \rfloor     & \text{for } \lfloor kx \rfloor \text{ even,} \\
            1 - kx + \lfloor kx \rfloor & \text{for } \lfloor kx \rfloor \text{ odd.} \\
        \end{cases}
    \]
\end{definition}
We first note that $t_k$ can be implemented by a $k$-node, 2-layer network as
\[
    t_k(x) = \sigma(kx) + \sum_{i=1}^{k-1} 2 (-1)^i \sigma(kx - i) .
\]
To see this, note that $\sigma$ is only nonzero when its argument is positive, and in this case it is equal to its input, so the sum can be rewritten (for $x \in [0,1]$) as
\[
    t_k(x) = kx + \sum_{i=1}^{\lfloor kx \rfloor} 2 (-1)^i (kx - i) .
\]
For $\lfloor kx \rfloor$ even, we cancel out adjacent pairs of the sum to $-1$, leaving us with 
$kx + \frac{\lfloor kx \rfloor}{2} (-2) = kx - \lfloor kx \rfloor$.
For $\lfloor kx \rfloor$ odd, we cancel out adjacent pairs leaving out the first term, leaving us with 
$kx - 2 (kx-1) - \frac{\lfloor kx \rfloor - 1}{2} (-2) = 1 - kx + \lfloor kx \rfloor$. 

We also see the identity \citep[see][Lemma 3.11]{telgarsky2016benefits} 
\[
    t_j(t_k(x)) = t_{jk}(x)
\]
since along each interval $[i/k, (i+1)/k]$, we get a copy of $t_j$ or its reflection.

The construction at the heart of \autoref{multidtentmapconstruction} uses iterated tentmaps to transfer an $n$-dimensional distribution on the unit cube onto a space-filling curve on the unit cube in $d$ dimensions. To this end, we will prove a lemma justifying that this sort of space filling curve gives a bound in Wasserstein distance.

\begin{lemma}
    Let $f:[0,1]^n \to [0,1]^d$ be the map which takes $(x_1, x_2, \ldots, x_n)$ to the point
    \[
        \left(
            t_1(x_1), t_k(x_1), t_{k^2}(x_1), \ldots, t_{k^{d_1-1}}(x_1), 
            t_1(x_2), \ldots, t_{k^{d_2-1}}(x_2),
            t_1(x_n), \ldots, t_{k^{d_n-1}}(x_n)
        \right),
    \]
    where $k, d_1, d_2, \ldots, d_n$ are positive integers with $\sum_{i=1}^n d_i = d$. 
    Then 
    \[
      W\del[2]{f \# U([0,1]^n), U([0,1]^d)} \le \frac{\sqrt{d}}{k}.
    \]
\end{lemma}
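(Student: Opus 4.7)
The plan is to construct an explicit transport map built by induction on the output dimension of a single input block. Decompose $y \in [0,1]^d$ as $y = (y^{(1)}, \ldots, y^{(n)})$ with $y^{(i)} \in [0,1]^{d_i}$, and let $c_m(x) := (t_1(x), t_k(x), \ldots, t_{k^{m-1}}(x))$ denote the single-input curve that forms each block of $f$. I will construct measurable maps $\Phi_m : [0,1]^m \to [0,1]$ satisfying (i) $\Phi_m \# U([0,1]^m) = U([0,1])$ and (ii) $|c_m(\Phi_m(y)) - y| \le \sqrt{m}/k$ for all $y \in [0,1]^m$. The blockwise map $\Phi(y) := (\Phi_{d_1}(y^{(1)}), \ldots, \Phi_{d_n}(y^{(n)}))$ then pushes $U([0,1]^d)$ to $U([0,1]^n)$ by independence of the blocks, and $(f(\Phi(Y)), Y)$ with $Y \sim U([0,1]^d)$ is a valid coupling whose pointwise Euclidean cost is bounded by $\sqrt{\sum_i d_i/k^2} = \sqrt{d}/k$, which proves the lemma.

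The recursion rests on the identity $c_m(x) = (x, c_{m-1}(t_k(x)))$, a direct consequence of the semigroup property $t_j \circ t_k = t_{jk}$ already noted in the paper. Given $\Phi_{m-1}$, define $\Phi_m(y_1, \ldots, y_m) = x$ by first setting $\tilde x := \Phi_{m-1}(y_2, \ldots, y_m)$, then letting $i := \lfloor k y_1 \rfloor$ identify the tooth of $t_k$ containing $y_1$, and finally taking $x$ to be the unique element of $[i/k, (i+1)/k]$ satisfying $t_k(x) = \tilde x$; explicitly, $x = (i + \tilde x)/k$ when $i$ is even and $x = (i + 1 - \tilde x)/k$ when $i$ is odd. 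The error estimate is then immediate from the recursion:
\[
  |c_m(x) - y|^2 = |x - y_1|^2 + |c_{m-1}(\tilde x) - (y_2, \ldots, y_m)|^2 \le \frac{1}{k^2} + \frac{m-1}{k^2} = \frac{m}{k^2},
\]
using $|x - y_1| \le 1/k$ (both lie in the same length-$1/k$ tooth) together with the inductive hypothesis applied to $c_{m-1}$.

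The measure-preservation step is the one requiring care. With $Y \sim U([0,1]^m)$, set $I := \lfloor k Y_1 \rfloor$ and $\tilde X := \Phi_{m-1}(Y_2, \ldots, Y_m)$. Since $Y_1$ is independent of $(Y_2, \ldots, Y_m)$, the pair $(I, \tilde X)$ is independent, with $I$ uniform on $\{0, \ldots, k-1\}$ and $\tilde X$ uniform on $[0,1]$ by the inductive hypothesis. Conditional on $I = i$, the output $X = \Phi_m(Y)$ is an affine bijection of $\tilde X$ onto the tooth $[i/k, (i+1)/k]$, so $X \mid I = i$ is uniform on that tooth; averaging over $i$ recovers $X \sim U([0,1])$. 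The base case $m = 1$ is immediate: $c_1(x) = t_1(x) = x$ on $[0,1)$, and $\Phi_1 := \mathrm{id}$ satisfies both (i) and (ii) trivially.

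The main obstacle I anticipate is really combinatorial bookkeeping rather than a conceptual barrier---ensuring the per-coordinate $1/k$ error truly adds in quadrature, which is why the whole argument is carried out in squared Euclidean norm and produces $\sqrt{d}/k$ rather than $d/k$. Measure-zero boundary sets (where $\lfloor k y_1 \rfloor$ is ambiguous, or where $x$ lands at a tooth endpoint) pose no difficulty for either (i) or (ii) and contribute nothing to the Wasserstein integral.
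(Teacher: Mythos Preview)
Your proof is correct and takes a genuinely different route from the paper's. The paper argues combinatorially: it partitions $[0,1]^n$ into $k^d$ input boxes $I_{a_1}^{d_1}\times\cdots\times I_{a_n}^{d_n}$ and $[0,1]^d$ into $k^d$ output boxes of side $1/k$, shows directly that $f$ sends each input box into a \emph{distinct} output box, and then invokes the existence of a coupling that pairs mass within matching boxes. You instead build an explicit deterministic transport map $\Phi:[0,1]^d\to[0,1]^n$ by induction on the block dimension, exploiting the recursion $c_m(x)=(x,c_{m-1}(t_k(x)))$ to peel off one coordinate at a time; the coupling $(f(\Phi(Y)),Y)$ is then entirely explicit. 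Your approach is more constructive and makes the $\sqrt{d}/k$ bound emerge cleanly from the squared-norm decomposition, while the paper's box-bijection argument is shorter and gives a more global picture of where the image of $f$ sits inside $[0,1]^d$. Both arguments are ultimately encoding the same phenomenon---each length-$k^{-d_i}$ subinterval of input coordinate $i$ lands in a unique $1/k$-cube in the corresponding output block---but yours reaches it pointwise and recursively rather than via a counting argument.
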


\begin{proof}
    Call a subinterval in $[0,1]$ a \textit{$k^i$-interval} if it is of the form
    \[
        I_a^i = (a k^{-i}, (a+1) k^{-i})
    \]
    Where $a$ is an integer.
    We claim that the function $f$ maps each input box of the form
    \[
        I_{a_1}^{d_1} \times \cdots \times I_{a_n}^{d_n} 
    \]
    to a subset of a distinct output box of the form
    \[
        I_{b_1}^{1} \times \cdots \times I_{b_n}^{1} .
    \]
    To see this, we must show now that any two points in the same input box in $[0,1]^n$ map to the same output box in $[0,1]^d$, but that points from different boxes in $[0,1]^n$ map to different boxes in $[0,1]^d$.  
    For an input box parameterized by $(a_i)_{i=1}^n$, we note that $t_{k^{e}}$ maps a $k^{d_i}$-interval $I_{a_i}^{d_i}$ to a $k^{d_i-e}$-interval for all $0 \le e < d_i$ and to the interval $[0,1]$ for $e = d_i$. 
    Thus, for two points $x,y \in [0,1]^n$, if $x_i, y_i$ fall in the same interval $I_{a_i}^{d_i}$, then for each coordinate $j$ corresponding to coordinate $i$, the $j$th coordinate of $f(x)$ and $f(y)$ will fall in the same interval $I_{b_j}^1$. 
    If $x_i$ and $y_i$ fall in different intervals, then there will be a $j$ corresponding to $i$ such that the $j$th component of $f(x)$ and $f(y)$ fall in different $k$-intervals. Specifically, if $I_{a_i}^{e}$ contains both $x_i$ and $y_i$ but no $k^{e + 1}$ interval contains both, then $f(x)_j$ and $f(y)_j$ will fall in different $k$-intervals where $j$ is the $e$th component of the output corresponding to $i$.

    Since there are $k^d$ boxes of both the input and the output space, we see that there must be a 1-to-1 mapping from the boxes in $[0,1]^n$, to the boxes in $[0,1]^d$ containing their images under $f$. Since both types of boxes have measure $k^{-d}$ in the uniform measures on their respective unit cubes, there exists a coupling $\pi \in \Pi(f \# U([0,1]^n), U([0,1]^d))$ such that $\pi$ is supported on pairs $(x,y)$ belonging to the same box of the latter type. Then, since $|x-y| \le \frac{\sqrt{d}}{k}$ for any two points in a cube of side length $k^{-1}$, it suffices to choose any $\pi$ which arbitrarily associates points in the same cube, and the desired bound on Wasserstein distance follows.

\end{proof}

Now that we see how the tentmap-based function $f$ can achieve a low Wasserstein distance, the proof of \Cref{multidtentmapconstruction} follows by writing $f$ as a ReLU network.

\begin{proof}[Proof of \Cref{multidtentmapconstruction}]

  Our network is designed as follows: Each layer of the network has two types of nodes:
      $d$ are ``carry-forward nodes'' which copy forward the value of a specific output once it is generated by a layer,
      and the remainder are ``space-filling nodes'', which compute high-frequency tent maps. 
  We first set aside the $d$ nodes in each layer (a total of $dL$ nodes) to use as carry-forward nodes and use $x_{l,i,\texttt{carry}}$ to denote the carry-forward node in layer $l$ for output $i$. 
    The remaining $N-dL$ space-filling nodes each correspond to a specific input component. We will call $n_{l,i}$ the number of space-filling nodes corresponding to input $i$ in layer $l$. We use $x_{l,i,j,\texttt{space}}$ to denote the input of the $j$th node corresponding to input $i$ in layer $l$. 
    
    We define the weights of the nodes such that the pre-ReLU activation of the $j$th node corresponding to input $i$ in layer $l$ is
    \[
        x_{l,i,1,\texttt{space}} := n_{l,i} t_{k_{l,i}}(x_i) - (j-1),
    \]
    where $k_{l,i} = \prod_{m=1}^{l-1} n_{m,i}$.
    
    We see by induction on $l$ that it is possible to have the activations thus: For $l=1$, we have $k_{l,i} = 1$, so the tent map we are replicating is the identity, and all the activations are affine functions of the input:
    \[
        x_{1,i,j,\texttt{space}} = n_{l,i} (x_i) - (j-1).
    \] 
    For $l > 1$, we have (using the inductive hypothesis and the sum form of the tentmap) the tentmap $t_{n_{l,i}}$ as an affine combination of the post-ReLU activations of the previous layer evaluated on $t_{k_{l,i}}(x_i)$. Thus, using the product rule, we can obtain the value
    \[
        t_{n_{l,i}}(t_{k_{l,i}}(x_i)) = t_{k_{l+1,i}}(x_i)
    \]
    as an affine combination of layer $l$. And since this value can be computed in layer $l$, so can the affine transformations 
    \[
        n_{l+1,i} t_{k_{l+1,i}}(x_i) - (j-1) = x_{l,i,j,\texttt{space}}
    \]
    for any value of $n_{l+1, i}$ and $j$.

    Now that we have established how the space-filling nodes implement the tentmap function, we specify how these tentmaps feed in to the output. For input vector $(x_1, x_2, \ldots, x_n)$ the output vector will be of the form
    \[
        \left(
            t_1(x_1), t_k(x_1), t_{k^2}(x_1), \ldots, t_{k^{\lceil d/n \rceil - 1}}(x_1), t_1(x_2), \ldots, t_1(x_n), \ldots, t_{k^{\lfloor d/n \rfloor - 1}}(x_n)
        \right),
    \]
    where $k$ is an whole number depending on $N,L,n,d$.
    The $d$ outputs are split evenly among the $n$ inputs, with each output manifesting as a tentmap evaluated on its designated input. Each input has at most $\lceil \frac{d}{n} \rceil$ and at least $\lfloor \frac{d}{n} \rfloor$ output nodes taking the form of a tentmap of that input. Note that we allow the trivial tentmap $t_1$, which is just the identity on $[0,1]$, and our construction has each input $x_i$ with exactly one output of the form $t_1(x_i)$. Our goal is to make $k$ as large as possible with the limited number of carry-forward nodes and layers available, and then to prove that the Wasserstein accuracy of this construction decreases quickly with $k$.

    We split the $N - dL$ space-filling nodes among the inputs in proportion with the number of outputs that input is responsible for. Thus, each output should get at least $\lfloor \frac{N - dL}{d} \rfloor $ nodes. Furthermore, each output that computes a nontrivial tentmap of its input is associated with a run of $\lfloor \frac{L}{\lceil \frac{d-n}{n} \rceil } \rfloor $ layers over which to distribute these nodes, so that the nodes for a certain input don't go over the total number of layers. With this in place, using the construction described above, we can guarantee a $k$ value of 
    \[
        k = 
        \left\lfloor
            \frac{\lfloor \frac{N - dL}{d} \rfloor}{\lfloor \frac{L}{\lceil \frac{d-n}{n} \rceil } \rfloor} 
        \right\rfloor^{\left\lfloor \frac{L}{\lceil \frac{d-n}{n} \rceil} \right\rfloor},
    \]
    or to lower bound this with a more manageable expression,
    \[
        k \ge 
        \left\lfloor
            \frac{\lfloor \frac{N - dL}{d} \rfloor}{ \frac{L}{\lceil \frac{d-n}{n} \rceil } } 
        \right\rfloor^{\lfloor \frac{nL}{d} \rfloor}
        \ge
        \left\lfloor
            \left( \frac{d-n}{n} \right) \frac{N - dL + d}{dL}  
        \right\rfloor^{\lfloor \frac{nL}{d} \rfloor}        
        .
    \]
    Thus, we have the Wasserstein distance upper bound
    \[
        W(f \# U([0,1]^n), U([0,1]^d)) 
        \le \sqrt{d}         
        \left\lfloor
            \left( \frac{d-n}{n} \right) \frac{N - dL + d}{dL}  
        \right\rfloor^{-\lfloor \frac{nL}{d} \rfloor}.
    \]    

\end{proof}

\subsection{Proof of \autoref{MultidimAffinePiece}}

A concurrent proof of this theorem appears in \citet{zhang2018tropical}, based on tropical geometry. Our proof is based on a lemma about how many different orthants an $n_0$-dimensional hyperplane can intersect, which turns out to be exponential in $n_0$. We then inductively track how many affine pieces exist in each layer of the network.

\begin{proof}
    
    The proof requires a lemma:
    \begin{lemma}
        A $k$-dimensional hyperplane $P$ in $\R^n$ intersects at most
        $ \displaystyle \sum_{j=0}^{k} \binom{n}{j} $
        orthants.
    \end{lemma}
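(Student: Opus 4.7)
The plan is to convert the problem into counting chambers of a hyperplane arrangement inside $P$. I would identify each orthant of $\R^n$ with a sign pattern $\sigma \in \{\pm 1\}^n$ via $\Omega_\sigma := \{x \in \R^n : \sigma_i x_i > 0 \text{ for all } i\}$, so that $P$ meets $\Omega_\sigma$ iff some point of $P$ realizes the sign pattern $\sigma$ on the $n$ coordinate functionals $x \mapsto x_i$. Restricting each coordinate functional to $P$ yields an affine function $\ell_i$ on the $k$-dimensional affine space $P$, whose zero set $H_i := \{p \in P : \ell_i(p) = 0\}$ is either a codimension-one affine subspace of $P$ or (in the degenerate case $\ell_i \equiv 0$) all of $P$. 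The orthants intersected by $P$ are then in bijection with the open chambers of the arrangement $\{H_1, \ldots, H_n\}$ inside $P$.

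The remaining step is the classical bound that any arrangement of $n$ hyperplanes in a $k$-dimensional affine space has at most $\sum_{j=0}^k \binom{n}{j}$ chambers. I would prove this by a short simultaneous induction on $n$ and $k$. The arrangement $\{H_1, \ldots, H_{n-1}\}$ has at most $\sum_{j=0}^k \binom{n-1}{j}$ chambers by the inductive hypothesis, and introducing $H_n$ creates at most one new chamber per existing chamber that $H_n$ actually crosses; this crossing count is in turn at most the number of chambers the induced subarrangement $\{H_i \cap H_n\}_{i<n}$ carves inside $H_n$, which is a $(k-1)$-dimensional affine space, so by induction it is at most $\sum_{j=0}^{k-1} \binom{n-1}{j}$. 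Adding the two contributions and applying Pascal's rule gives $\sum_{j=0}^k \binom{n}{j}$, closing the induction.

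The main obstacle is handling degeneracies cleanly: some $\ell_i$ may be identically zero on $P$, or the induced intersections $H_i \cap H_n$ may fail to be distinct codimension-one subspaces of $H_n$. Each such degeneracy only reduces the chamber count, so the upper bound is preserved without any general-position hypothesis; making this rigorous (rather than falling back on a perturbation argument to the generic case) is where the bookkeeping is slightly delicate. With that in hand, the bijection from the first paragraph immediately delivers the lemma.
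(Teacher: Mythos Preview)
Your argument is correct and rests on the same core idea as the paper: the orthants met by $P$ correspond to the regions cut out by the coordinate hyperplanes after restriction to $P$, and one then invokes the classical $\sum_{j\le k}\binom{n}{j}$ region bound. The implementations differ slightly. You work intrinsically in the $k$-dimensional affine space $P$, form the affine arrangement of the restricted coordinate hyperplanes, and prove the chamber bound from scratch by the standard Buck-type induction (add one hyperplane at a time, count new regions via the induced arrangement one dimension down, then apply Pascal's rule). The paper instead passes to the $(k+1)$-dimensional linear span of $P$ and the origin, projects the $n$ coordinate directions there to obtain a \emph{central} arrangement, cites \citep[Lemma~3.3]{bartlett2009neural} for the bound $2\sum_{j\le k}\binom{n}{j}$ on its regions, and then observes that $P$ meets exactly half of them. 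Your route is a little more self-contained and handles degeneracies (some $\ell_i$ vanishing identically, repeated or empty $H_i\cap H_n$) by noting they only decrease the count; the paper sidesteps this by a genericity assumption (``not parallel to any unit vector'') and a citation. One small omission worth patching in your write-up: besides $\ell_i\equiv 0$, the other degenerate case is $\ell_i$ a nonzero constant on $P$ (so $H_i=\emptyset$); this is harmless for the same monotonicity reason, but should be mentioned alongside the case you flagged.
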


    This lemma follows from \citep[Lemma 3.3]{bartlett2009neural}. We assume without loss of generality that our plane is not parallel to any unit vector in $\R^n$. We then consider the $k+1$ dimensional space containing $P$ and the origin as a copy of $\R^k$, and we project the $n$ unit vectors of the ambient $\R^n$ into this copy of $\R^{k+1}$. Applying \citep[Lemma 3.3]{bartlett2009neural}, the perpendicular spaces of the vector projections split the copy of $\R^{k+1}$ into $ 2 \sum_{j=0}^{k} \binom{n}{j} $ connected components. These perpendicular spaces correspond to the separating planes of the orthants in $\R^n$, and since $P$ touches half of the connected components, it intersects $ \sum_{j=0}^{k} \binom{n}{j} $ orthants in total.
  
    With this lemma, we can now prove the theorem. We proceed by induction on $L$. In the $L=0$ case, there are no nonlinearities in this network, and so $f$ is affine on its entire input.

    In the inductive case, we consider $f$ computed by an $L+1$ layer ReLU network. Let $g: \R^{n_0} \to \R^{n_{L}}$ represent the function computed by the first $L-1$ hidden layers of $f$, outputting the last hidden layer of $f$. 
    That is, $g = A_{L} \circ \sigma_{n_{L-1}} \circ \dots \circ \sigma_{n_2} \circ A_2 \circ \sigma_{n_1} \circ A_1$.
    By the inductive hypothesis, $\R^{n_0}$ can be partitioned into
    \[ 
        N_A(g) \le  \prod_{i=1}^{L-1} \left( \sum_{j=0}^{n_0} \binom{n_i}{j} \right) 
    \]
    convex parts $S_1, \cdots S_{N_A(g)}$ such that $g$ is affine on each. For any of these convex regions $S_k$, the image of $g(S_k)$ is a convex set in $\R^{n_L}$. 
    Consider the partition of $\R^{n_0}$ obtained by dividing each $S_k$ into subpieces according to the orthant of a points image under $g$. 
    Because each orthant is convex, the preimage of each orthant under the restriction of $g$ to $S_k$ (which is affine) is also convex. 
    Moreover, the function $f = A_{L+1} \circ \sigma_{n_L} \circ g$ is affine on each of these subpieces, because $g$ is affine on the subpieces and $\sigma_{n_L}$ is affine on each orthant (and $A_{L+1}$ is affine).
    Finally, since $g(S_k)$ is an affine image of a subset of $\R^{n_0}$, it lies in a $n_0$-dimensional hyperplane in $\R^{n_L}$, which can intersect at most $\sum_{j=0}^{n_0} \binom{n_L}{j}$ orthants. Thus, the subdivision step divides each $S_k$ into at most $\sum_{j=0}^{n_0} \binom{n_L}{j}$ subpieces. We therefore get
    \[
        N_A(f) \le N_A(g) \sum_{j=0}^{n_0} \binom{n_L}{j} \le \prod_{i=1}^L \left( \sum_{j=0}^{n_0} \binom{n_i}{j} \right) .
    \]
    We also note the upper bound on the sum
    \[
        \sum_{j=0}^{n_0} \binom{n_i}{j} 
        \le \binom{n_i + n_0}{n_0}
        \le \frac{(n_i + n_0)^{n_0}}{n_0!}
        \le \left( e \frac{n_i + n_0}{n_0} \right)^{n_0}.
    \]
    If we substitute this in above, we get the bound
    \[
        N_A(f) \le  
        \prod_{i=1}^L \left( e \frac{n_i + n_0}{n_0} \right)^{n_0}
    \]
    and since (keeping the total number of nodes fixed) this product is maximized when all layers have the same number of nodes, we get
    \[
        N_A(f) \le  
        \prod_{i=1}^L \left( e \frac{N}{n_0L} + e \right)^{n_0}
        = \left( e \frac{N}{n_0L} + e \right)^{n_0L} .
    \]
    This proves the claim.

\end{proof}

\subsection{Proof of \autoref{distributiondimensionalitygap}}

The proof of this theorem comes in a few parts:
\begin{itemize}
    \item We introduce a new notation to capture the idea of a Wasserstein distance between a distribution and a set.
    \item We prove a lemma about how distances between certain well-behaved distributions and hyperplanes can be lower bounded.
    \item We break the range of $f$ and the target distribution up into a collection of hyperplanes and distributions that can be handled by the lemma.
\end{itemize}

First, let us specify our idea of a Wasserstein distance between a distribution and a set:

\begin{definition}
    For a distribution $\mu$ and a closed, convex set $S$ on $\R^d$, we define the Wasserstein distance of the distribution from the set as
    \[ 
        W(\mu, S) := \inf_{\pi \in \Pi(\mu, S)} \int |x-y| d\pi(x, y) = \inf_{\nu \in \Pi_S} W(\mu, \nu)
    \]
    where $\Pi(\mu, S)$ is the set of joint distributions having $\mu$ as left marginal and right marginal supported on $S$, and where $\Pi_S$ is the set of all distributions supported on $S$.
\end{definition}

We can immediately note that an alternative way to view this definition is
\[
    W(\mu, S) = \int d(x, S) d\mu(x),
\]
where $d(x, S) := \inf_{y\in S} |x-y|$ represents the distance of $x$ from $S$. To see this, we claim that 
there is an optimal coupling $\pi^*\in\Pi(\mu, S)$ which attains the minimum $\int d(x,y)d\pi^*(x,y) = W(U_B,S)$ and which is supported on $(x,y)$ pairs where $y$ is the unique (since $S$ is closed and convex) closest point in $S$ to $x$. 
  To see why such a coupling is optimal, note that 
  for any other distribution $\pi \in \Pi(\mu, S)$,
    \[
      \int |x-y| d\pi(x,y)
      \geq \int \inf_{y\in S} |x-y| d\pi(x,y)
      = \int |x-y| d\pi^*(x,y).
    \]

With this new definition, we move on to a lemma which lower bounds the Wasserstein distance between a uniform distribution on an arbitrary bounded measurable set and a hyperplane:

\begin{lemma} \label{cylinderlemma}
    Let $B \subseteq B_0 \subseteq \R^d$ where $B_0$ is a ball of radius $l$, and $B$ has measure $m(B)$. Let $S$ be an $n$-dimensional hyperplane in $\R^d$. Then the Wasserstein distance between the uniform distribution on $B$ and the plane $S$ has the following lower bound:
    \[
        W(U_B, S) \ge \frac{d-n}{d-n+1} \cdot
        \left(\frac{\Gamma(\frac{d-n}{2} + 1)\Gamma(\frac{n}{2} + 1)}{\pi^{\frac{d}{2}}} l^{-n} m(B) \right)^{1/(d-n)}
    \]
\end{lemma}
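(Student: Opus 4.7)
The plan is to use the integral characterization $W(U_B, S) = \frac{1}{m(B)}\int_B d(x, S)\, dx$ established just before the lemma statement, and then lower bound the integral over the "worst case" $B \subseteq B_0$ of fixed measure, which is the $B$ pushed as close to $S$ as possible.

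First I would set up adapted coordinates. Write $S = p + V$ with $V$ an $n$-dimensional subspace, and split $\R^d = V \oplus V^\perp$ so that each $x$ corresponds to a pair $(y,z) \in \R^n \times \R^{d-n}$ with $d(x, S) = |z - z_0|$ for a fixed offset (which we translate away). The crucial geometric observation is that the projection of the radius-$l$ ball $B_0$ onto any linear subspace is itself contained in a radius-$l$ ball, so $B_0 \subseteq \{(y,z) : |y| \le l\}$, the infinite cylinder over the $y$-ball.

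Next I would apply the layer-cake formula to write
\[
    \int_B d(x,S)\, dx = \int_0^\infty m(\{x \in B : d(x,S) > t\})\, dt,
\]
and use $m(\{x\in B : d(x,S) > t\}) \ge m(B) - F(t)$, where $F(t) := m(\{x \in B_0 : d(x,S) \le t\})$. By the cylinder containment above, $F(t)$ is bounded by the product of the $n$-ball of radius $l$ and the $(d-n)$-ball of radius $t$, giving $F(t) \le C\,t^{d-n}$ with
\[
    C = \frac{\pi^{d/2}\, l^n}{\Gamma(\tfrac{d-n}{2}+1)\,\Gamma(\tfrac{n}{2}+1)}.
\]

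Then I would set $\tilde r := (m(B)/C)^{1/(d-n)}$, so that $m(B) - F(t) \ge m(B) - C t^{d-n} \ge 0$ on $[0,\tilde r]$. Integrating this explicit lower envelope,
\[
    \int_0^{\tilde r} \bigl(m(B) - C t^{d-n}\bigr)\, dt = m(B)\,\tilde r - \tfrac{C}{d-n+1}\,\tilde r^{\,d-n+1} = \tfrac{d-n}{d-n+1}\, m(B)\, \tilde r,
\]
and dividing by $m(B)$ yields exactly the stated bound after substituting $\tilde r = (m(B)/C)^{1/(d-n)}$.

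The main obstacle is identifying the correct enclosing cylinder that produces the sharp gamma-function constant; bounding $B_0$ by its circumscribing box gives a qualitatively identical polynomial bound but with the wrong constant. The key geometric input is that the orthogonal projection of a Euclidean ball is a Euclidean ball of the same radius, which lets us factor $F(t)$ as the product of two ball volumes, each contributing its own gamma factor. The rest is a one-line layer-cake computation and routine simplification.
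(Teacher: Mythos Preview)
Your argument is correct and yields exactly the stated constant. The only difference from the paper is packaging: the paper explicitly constructs the extremal ``cylinder'' $R^* = R_1 \times R_2$ (with $R_1$ the $n$-ball of radius $l$ and $R_2$ the $(d-n)$-ball of radius $r$ chosen so $m(R^*)=m(B)$), observes $m(B\setminus R^*) = m(R^*\setminus B)$, and swaps mass using $d(\cdot,S)\ge r$ on $B\setminus R^*$ versus $d(\cdot,S)\le r$ on $R^*\setminus B$ to get $\int_B d(x,S)\,dx \ge \int_{R^*} d(x,S)\,dx$, then evaluates the last integral directly. You instead run the dual layer-cake computation, bounding the super-level sets via the same cylinder containment and integrating the explicit envelope $m(B)-Ct^{d-n}$ on $[0,\tilde r]$. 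Your $\tilde r$ is exactly the paper's $r$, and the two arguments are equivalent rearrangement proofs; yours is arguably a bit cleaner since it never needs to name $R^*$ or track the symmetric-difference bookkeeping.
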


\begin{proof}

  Using our new Wasserstein distance definition, we see that
    \begin{equation} \label{planeballequation}
      W(U_B, S) = \frac{1}{m(B)} \int_B d(x,S) dx,        
    \end{equation}
    where $m(B)$ represents the Lebesgue measure of $B$.
     
    We will now lower bound the integral over $B$ on the right hand side. We know $B$ is contained in $B_0$ of radius $l$ (centered at $x_0$, say), and the orthogonal projection $\Proj_S(B)$ of $B$ onto the plane $S$ is, therefore contained in $R_1 := \Proj_S(B_0)$, which is a ball of radius $l$ on the space $S$ (centered at $\Proj_S(x_0)$).
    In the orthogonal complement space to $S$, define $R_2 \subset S^\perp$ as the ball which is centered on $S$ and has radius $r$ such that $m(R_1) \cdot m(R_2) = m(B)$ 
    and define $R^*$ to be the Cartesian product of these balls in $d$-dimensional space, so that
    \begin{equation} 
        m(R^*) = m(R_1 \times R_2) = m(R_1) \cdot m(R_2) = m(B).        
    \end{equation}
    We will see that this $R^*$ can replace $B$ in \autoref{planeballequation} to provide the desired lower bound for the expression. 
    From $m(B) = m(R^*)$, we get
    \[
      m(R^*\setminus B)
      = m(R^* \cup B) - m(B)
      = m(R^* \cup B) - m(R^*)
      = m(B\setminus R^*).
    \] 
    Since $\Proj_S(B) \subseteq R_1$, we have $B \subseteq R_1 \times S^\perp$, and since $R^*$ consists of all points in $R_1 \times S^\perp$ with distance $\le r$ from $S$, for any $x \in B \setminus R^*$, we have $d(x, S) \ge r$. 
    On the other hand, $d(x, S) \le r$ for $x \in R^* \setminus B$ (since all elements of $R^*$ are within $r$ of $S$).
    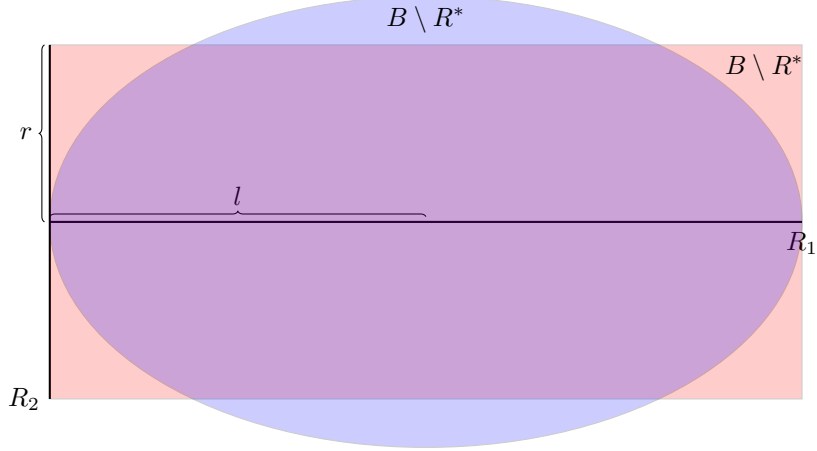
\begin{figure} \label{fig:rstarbdiagram}
    \centering
    \begin{tikzpicture}[xscale=5, yscale=3]
        \draw[thick] (-1,0) -- (1, 0) node[below] {$R_1$};
        \draw[decorate,decoration={brace,raise=2pt,amplitude=2pt}] (-1,0)  -- node[above = 3pt]{$l$} (0,0);
        \draw[thick] (-1, 3.14/4) -- (-1, -3.14/4)  node[left] {$R_2$};
        \draw[decorate,decoration={brace,raise=2pt,amplitude=2pt}] (-1,0)  -- node[left = 3pt]{$r$} (-1,3.14/4);
        \draw[fill = red, opacity = 0.2] (-1, 3.14/4) -- (-1, -3.14/4) -- (1, -3.14/4) -- (1, 3.14/4) -- cycle ;
        \draw[fill = blue, opacity = 0.2] (0, 0) circle [radius=1] ;
        \node[below] at (0, 1) {$B \setminus R^*$} ;
        \node[left,below] at (0.9, 3.14/4) {$B \setminus R^*$} ;
    \end{tikzpicture}
    \caption{Diagram of example $B$ (the ball in blue) and $R^*$ (the rectangle in red). Here $n=1$ and $d=2$.}
    \end{figure}
    Thus, we get a lower bound on the integral from \autoref{planeballequation} 
    \begin{align*}
        \int_B d(x,S) dx
        &=   \int_{B \cap R^*} d(x,S) dx + \int_{B \setminus R^*} d(x,S) dx \\
        &\ge \int_{B \cap R^*} d(x,S) dx + r \cdot m(B \setminus R^*) \\
        &=   \int_{B \cap R^*} d(x,S) dx + r \cdot m(R^* \setminus B) \\
        &\ge \int_{B \cap R^*} d(x,S) dx + \int_{R^* \setminus B} d(x,S) dx \\
        &=   \int_{R^*} d(x,S) dx,
    \end{align*}
    and so
    \begin{equation} \label{rreduction}
        W(U_B, S) \ge \frac{1}{m(B)} \int_{R^*} d(x,S) dx.
    \end{equation}

    (Note that we haven't given up much at this point; if our only restriction on $B$ was that its orthogonal projection was bounded by $l$, then we could have $R^* = B$ and the above inequality would be tight. As it is, if $l$ is much greater than $r$, then $B$ may share much overlap with $R^*$ anyway.)

    We can decompose the above integral over $R^*$ in \autoref{rreduction} into the components parallel and perpendicular to $S$:
    \begin{equation} \label{breakdown}
        \int_{R^*} d(x,S) dx = m(R_1) \cdot \int_{R_2} d(x, S) dx.
    \end{equation}
    The integral $\int_{R_2} d(x, S) dx$ is equivalent to the rotationally symmetric integral around the origin
    \begin{equation} \label{translate}
        \int_{R_2} d(x, S) dx = \int_{B_r} |x| dx,
    \end{equation} 
    where $B_r$ is the origin-centered ball of radius $r$ in $(d-n)$ dimensions. Intuitively, in a high dimensional space, most of the volume of a ball lies near its edge, so we can expect this integral to come out to about $r$ times the volume of the ball.
    We can evaluate the integral precisely by subtracting out $\int_{B_r} r - |x| dx$ and using the general formula that the volume of a cone (with $(d-n)$-dimensional base) is $\frac{1}{d-n+1}$ that of the cylinder with the same base and height: 
    \begin{equation} \label{conetrick}
        \int_{B_r} |x| dx
        = \int_{B_r} r dx - \int_{B_r} \del{r - |x|} dx
        = r \cdot m(B_r) - r \cdot \frac{1}{d-n+1} m(B_r) 
        = r \cdot m(B_r) \cdot \frac{d-n}{d-n+1}.
    \end{equation}
    Putting together \Crefrange{breakdown}{conetrick}, we get the integral from \autoref{rreduction} in terms of $r$, $m(B)$, $d$, and $n$:
    \[
        \int_{R^*} d(x,S) dx
        = r \cdot m(R_1) \cdot m(R_2) \frac{d-n}{d-n+1}
        =  r \cdot m(B) \cdot \frac{d-n}{d-n+1}.
    \]

    All we need to complete the bound is to compute $r$. 
    Recall we have $R_1$ as an $n$-dimensional ball of radius $l$ parallel to $S$,
    and $R_2$ is a $(d-n)$-dimensional ball of radius $r$ orthogonal to $S$ and centered on $S$. We use the fact that $m(R_1) \cdot m(R_2) = m(B)$ to get
    \[
        \left( \frac{\pi^{\frac{d-n}{2}}}{\Gamma(\frac{d-n}{2} + 1)} {r}^{d-n}  \right)
        \left( \frac{\pi^{\frac{n}{2}}}{\Gamma(\frac{n}{2} + 1)} l^{n}  \right)
        = m(B) ,
    \]
    which after solving for $r$ gives
    \[ 
        r =\left(\frac{\Gamma(\frac{d-n}{2} + 1)\Gamma(\frac{n}{2} + 1)}{\pi^{\frac{d}{2}}} l^{-n} m(B) \right)^{1/(d-n)}.
    \]
    Substituting this value for $r$ in \autoref{rreduction} gives
    \begin{align*} 
        W(U_B, S)
        &\ge \frac{1}{m(B)} \int_{R^*} d(x,S) dx \\
        &\ge \frac{d-n}{d-n+1} \cdot
        \left(\frac{\Gamma(\frac{d-n}{2} + 1)\Gamma(\frac{n}{2} + 1)}{\pi^{\frac{d}{2}}} l^{-n} m(B) \right)^{1/(d-n)}, \\
    \end{align*}
    as desired.
\end{proof}

Now that we have this lemma regarding distance to hyperplanes, we prove the theorem by applying the lemma to the planes on which the range of our piecewise affine function lies.

\begin{proof}[Proof of \Cref{distributiondimensionalitygap}]

    We first note that since $P$ can be any distribution on $\R^n$, $f\#P$ can be any distribution on the range of $f$. Therefore, it suffices to lower bound the Wasserstein distance between the distribution $U_B$ and the range of $f$.

    Since $f$ is piecewise affine, its range is a subset of the union of $N_A(f)$ $n$-dimensional hyperplanes in $\R^d$, which we name $S_1, \cdots, S_{N_A(f)}$. We call the union of these planes $S$, and we note that
    \[
        W(U_B, \Range f) \ge W(U_B, S),
    \]
    since any distribution supported on $\Range f$ is supported on its superset $S$. 
    
    The Wasserstein distance of $U_B$ to this set $S$ is lower bounded by the integral of $d(x,S)$ over $U_B$, since this lower bounds the integral for any $\pi \in \Pi(U_B, S)$. In fact, this is an equality, since the following correspondence gives us a $\pi$ that achieves this minimum: For each $i$, let $B_i \subseteq B$ consist of all $y$ that are nearer to $S_i$ than any other $S_j$ choosing the smaller index in case of ties. That is,
    \[
        B_i = \{ x \in B: i = \argmin_{j} d(x, S_j) \} .
    \] 
    This makes each $B_i$ a measurable set such that for $x \in B_i$, we have
    \[
        \inf_{y \in S} |x - y| = d(x, S_i),
    \]
    and by choosing $\pi \in \Pi(U_B, S)$ supported on $(x,y)$ pairs where $y$ is the closest point in $S$ to $x$, (choosing the minimum index when ambiguity arises), we get 
    \begin{align*}
        W(U_B, S) 
        &= \inf_{\pi \in \Pi(U_B, S)} \int|x - y| d\pi(x,y) \\
        &= \int d(x, S) dU_B(x)        \\
        &= \frac{1}{m(B)} \sum_i \int_{B_i} d(x, S_i) dy . 
    \end{align*}

    As we noted, these integrals can be expressed in terms of the Wasserstein distances of the uniform distributions on the $B_i$ to their respective $S_i$:
    \[ 
        W(U_B, S) 
        = \frac{1}{m(B)} \sum_i m(B_i) \cdot W(U_{B_i}, S_i) .
    \]
    We apply the lemma to lower bound this integral for each $i$, whereby
    \begin{align*}
        W(U_B, S)
        &\ge \frac{1}{m(B)} \sum_i m(B_i) \cdot \frac{d-n}{d-n+1} \cdot
        \left(\frac{\Gamma(\frac{d-n}{2} + 1)\Gamma(\frac{n}{2} + 1)}{\pi^{\frac{d}{2}}} l^{-n} m(B_i) \right)^{1/(d-n)} .\\
        \intertext{Since the summand is convex in $m(B_i)$, Jensen's inequality inequality allows replacing
        $m(B_i)$ with $m(B)/N_A$, thus}
        &\ge \frac{1}{m(B)} \sum_i \frac{m(B)}{N_A} \cdot \frac{d-n}{d-n+1} \cdot
        \left(\frac{\Gamma(\frac{d-n}{2} + 1)\Gamma(\frac{n}{2} + 1)}{\pi^{\frac{d}{2}}} l^{-n} \frac{m(B)}{N_A} \right)^{1/(d-n)} \\
        &= N_A \frac{1}{N_A} \cdot \frac{d-n}{d-n+1} \cdot
        \left(\frac{\Gamma(\frac{d-n}{2} + 1)\Gamma(\frac{n}{2} + 1)}{\pi^{\frac{d}{2}}} l^{-n} \frac{m(B)}{N_A} \right)^{1/(d-n)} \\
        &= \frac{d-n}{d-n+1} \cdot
        \left(\frac{\Gamma(\frac{d-n}{2} + 1)\Gamma(\frac{n}{2} + 1)}{\pi^{\frac{d}{2}}} l^{-n} \frac{m(B)}{N_A} \right)^{1/(d-n)} ,
    \end{align*}
    which is of the desired form.
\end{proof}

\subsection{Proof of \autoref{powerseries}}

\begin{proof} 
    We will approximate $x^k$ inductively by multiplying the approximation for $x^{k-1}$ with $x$. 
    We will ensure that we approximate each $x^k$ to within $\frac{\eps}{(2M)^{n-k}}$ (assuming $M$ is at least 1).
    If we approximate the multiplication by $x$ function to within $\frac{\eps}{2(2M)^{n-k}}$, and consider that multiplication by $x$ is $M$-Lipschitz, then using Lemma \ref{compositionlemma}, we have that if $x^{k-1}$ is approximated to within $\frac{\eps}{(2M)^{n-k+1}}$ then $x^k$ will be approximated to within 
    \[
        M \frac{\eps}{(2M)^{n-k+1}} + \frac{\eps}{2(2M)^{n-k}} = \frac{\eps}{(2M)^{n-k}} .
    \]
    By induction, this construction will indeed approximate all $x^k$ to within our specified accuracy. Analyzing the size of this network, we see that the network module computing $x^{k-1} \cdot x = x^k$ will require  
    \[
        O(\ln(2(2M)^{n-k}/\eps) + \ln(M^{k})) = O( (n-k) \ln(2M) + (n-k) + \ln(1/\eps) + k \ln(M) )
    \]
    nodes. Summing this over $k=1$ to $n-1$ produces $\poly(n, \ln(M), \ln(1/\eps))$.
\end{proof}

\subsection{Proof of \autoref{normalcdfapprox}}
\begin{proof}
    As mentioned before, $\Phi$ has the series representation
    \[
      \Phi(z) = 
      \frac12 
      + \frac{1}{\sqrt{2\pi}}
        \sum_{k=0}^\infty \frac{(-1)^k z^{2k+1} }{k! (2k+1) 2^k}.
    \]
    We consider the truncated sum
    \[
      \Phi_n(z) = 
      \frac12 
      + \frac{1}{\sqrt{2\pi}}
        \sum_{k=0}^n \frac{(-1)^k z^{2k+1} }{k! (2k+1) 2^k} ,
    \]
    where we set $n = \max\{ 2 e M^2 - 1, \log_2(2/\eps) \}$. 
    This guarantees that for $x \in [-M, M]$, the error incurred by truncating the sum is
    \begin{align*}
        |\Phi(z) - \Phi_n(z) |
        &=  
        \frac{1}{\sqrt{2\pi}} 
        \left| 
          \sum_{k=n+1}^\infty \frac{(-1)^k z^{2k+1} }{k! (2k+1) 2^k}
        \right| \\
        &\le
          \sum_{k=n+1}^\infty \frac{ |M|^{2k+1} }{k!} .
          &{\because\textup{ Stirling's inequality}}          \\
        &\le \sum_{k=n+1}^\infty \frac{M^{2k+1}}{\sqrt{2\pi} k^{k+\frac12} e^{-k}} \\
        &= \sum_{k=n+1}^\infty \frac{1}{\sqrt{2\pi e}} \left( \frac{e M^2}{k} \right)^{k + \frac12} .\\
        \intertext{Since we chose $n+1 \ge 2 e M^2$, we have}
        &\le \sum_{k=n+1}^\infty \left( \frac{1}{2} \right)^k \\
        &= 2^{-n} ,\\
        \intertext{ and since we chose $n \ge \log_2(2/\eps)$,}
        &\le \frac{\eps}{2} .
    \end{align*}
    So the total error we get by omitting these terms is no more than $\frac{\eps}{2}$. 
    We approximate each $z^{2k+1}$ to within $\eps/(2n+2)$, and then multiply each $x^i$ by its Maclaurin coefficient. 
    Since each coefficient is no more than 1 in absolute value, the errors in each of the Maclaurin terms is no more than $\eps/(2n+2)$. 
    We can therefore add all $n+1$ of these Maclaurin terms and get an error less than $\frac{\eps}{2}$ from the truncated series $\tilde{f}$, and a total error no more than $\eps$ from the function $\Phi$ in the interval $[-M, M]$. 
    Applying \autoref{powerseries}, approximating the $x^i$ to this accuracy requires
    \[
        \poly(2n+1, \ln(M), \ln(1/\eps)) = \poly(M, \ln(1/\eps))
    \]
    nodes. Take $M$ sufficiently large that $ 1 - \Phi(M/2) < \eps$, which can be done with $M = O(\log(1/\eps))$. Then, add a component to the neural network that interpolates between this approximation on $[-M, M]$ and $1$ for $z > M/2$ and $0$ for $x < -M/2$. This guarantees the network is accurate for all values of $z$.
    Note that we can guarantee the range of this approximation falls in $[0,1]$, by adding a gadget that clamps the output to this interval.
\end{proof}

\subsection{Proof of \autoref{inverseapprox}}

\begin{proof}
    We construct a ReLU/Step network which contains $t$ copies of the neural network approximating $f$, 
    as well $3t + 3$ nodes called $x_{i, \texttt{low}}, x_{i, \texttt{mid}}, x_{i, \texttt{high}}$ for $i$ in $\{0, \ldots, t\}$.
    The network assigns the initial values 
    \[
        x_{0, \texttt{low}} := a, 
        \qquad
        x_{0, \texttt{high}} := a, 
    \]
    and for all values of $i$, we compute
    \[
        x_{i, \texttt{mid}} := 
        \frac{x_{i, \texttt{low}} + x_{i, \texttt{low}}}{2} .
    \]
    Let $y \in [c,d]$ be the input to our network for computing $f^{-1}$.
    For $0 \le i < t$, we let $x_{i, \texttt{mid}}$ be the input to the $i$th copy of the network computing $f$, and call the output node of this copy $y_{i, \texttt{mid}}$, and we correspondingly call.
    If $y_i \ge y$ (which we test using a step function activation), set
    \[
        x_{i+1, \texttt{low}} := x_{i, \texttt{low}}, \qquad
        x_{i+1, \texttt{high}} := x_{i, \texttt{mid}}, 
    \]
    and otherwise, set
    \[
        x_{i+1, \texttt{low}} := x_{i, \texttt{mid}}, \qquad
        x_{i+1, \texttt{high}} := x_{i, \texttt{high}}.
    \]
    We set the network output to be $x_{t, \texttt{mid}}$.

    By induction on the construction of these values, if $\tilde{f}$ is the approximation of $f$ given by the provided network, then the interval
    $[\tilde{f} (x_{i, \texttt{low}}), \tilde{f}(x_{i, \texttt{high}})]$ contains $y$ for each $i$. This implies that $x = f^{-1}(y)$ is in the interval 
    $[f^{-1}(\tilde{f} (x_{t, \texttt{low}})), f^{-1}(\tilde{f}(x_{t, \texttt{high}}))]$, since $f$ is increasing. 
    Moreover, since $f^{-1}$ is $L$-Lipschitz and $\tilde{f}$ is accurate to within $\eps$, the endpoints of this interval are within $\eps L$ of $x_{t, \texttt{low}}$ and $x_{t, \texttt{high}}$, so we know that the above interval is contained in 
    \[
        [x_{t, \texttt{low}} - \eps L , x_{t, \texttt{high}} + \eps L] .
    \]
    So $x$ is contained in this interval, but $x_{t, \texttt{mid}}$ is the midpoint of this interval, so the maximum possible distance between $x$ and $x_{t, \texttt{mid}}$ is half the length of the interval, which is $(b-a)2^{t+1} + \eps L$.
\end{proof}

\subsection{Proof of \autoref{uniformtonormal}}

\begin{proof}
  Applying \autoref{inversenormalcdfapprox} with $\eps = \eps_1$ (to be specified later), we approximate the inverse CDF of the normal distribution on $[\Phi(-\ln(1/\eps_1^2)), \Phi(\ln(1/\eps_1^2))]$. On the intervals $[0, \Phi(-\ln(1/\eps_1^2))]$ and $[\Phi(\ln(1/\eps_1^2)), 1]$, we set the output of the network to $-\ln(1/\eps_1^2)$ and $\ln(1/\eps_1^2)$ respectively (using step function activations to test if the input is in this range). Finally, we append a gadget computing the function $x \mapsto \max(-\ln(1/\eps_1^2), \min(x, \ln(1/\eps_1^2)))$, so that the output of our network $f$ lies in the range $[-\ln(1/\eps_1^2), \ln(1/\eps_1^2)]$. We now look to lower bound the Wasserstein for this generative network $f$. 
  \begin{align*}
        W(f \# U([0,1]), \cN)
        &= \inf_{\pi \in \Pi} |x - y| d\pi(x,y) .\\
        \intertext{We consider a coupling between $f \# U([0,1])$ and $\cN$ with pairs of the form $(f(x), \Phi^{-1}(x))$, where $x \sim U([0,1])$:}
        &\le \int_0^1 |f(x) - \Phi^{-1}(x)| dx. \\
        \intertext{We now split this integral into three parts}
        &= \int_{A} |f(x) - \Phi^{-1}(x)| dx + \int_{I \setminus A} |f(x) - \Phi^{-1}(x)| dx + \int_{[0,1] \setminus (I \cup A)} |f(x) - \Phi^{-1}(x)| dx \\
        &\le m(A) \cdot 2\ln(1/\eps_1^2)  + 1 \cdot \eps_1 + 2\int_{\Phi(\ln(1/\eps_1^2))}^1 |f(x) - \Phi^{-1}(x)| dx. \\
        \intertext{Rewriting the integral on the tails,}
        &= m(A) \cdot 2\ln(1/\eps_1^2)  + 1 \cdot \eps_1 + 2\int_{\ln(1/\eps_1^2)}^\infty 1-\Phi(x) dx, \\
        \intertext{and since the normal CDF has exponentially small tails}
        &= m(A) \cdot 2\ln(1/\eps_1^2)  + O(\eps_1). \\
        \intertext{Now, choosing $m(A)$ sufficiently small,}
        &= O(\eps_1), \\
  \end{align*}
  and we can choose $\eps_1$ sufficiently small so that this is under $\eps$. Since $\eps_1$ is linear in $\eps$, the construction still uses $\polylog(1/\eps)$ nodes.
\end{proof}

\subsection{Proof of \autoref{piecewiseaffinecdf}}

\begin{proof}
    Let $[0=a_0, a_1], [a_1, a_2], \ldots, [a_{N_A-1}, a_{N_A}=1]$ be the intervals on which $f|_{[0,1]}$ is affine. The distribution given by $f\# U([0,1)$ is a mixture of $N_A$ distributions
    \[
        f \# U([0,1]) = \sum_{i=0}^{N_A-1} \frac{1}{a_{i+1} - a_i} f \# U([a_i, a_{i+1}]),
    \]
    where the $f \# U([a_i, a_{i+1}])$ is either a uniform distribution on the interval $[f(a_i), f(a_{i+1})]$ (or $[f(a_{i+1}), f(a_{i})]$), or if $f(a_i) = f(a_{i+1})$, it is a point distribution on $f(a_i)$. Since these distributions have CDFs which are nonlinear only at $f(a_i)$ values, the CDF of $f \# U([0,1])$ (which is the weighted sum of CDFs of these simple distributions), is piecewise affine with nonlinearities at $f(a_i)$. In other words, it is piecewise affine in $N_A + 2$ pieces.
\end{proof}

\subsection{Proof of \autoref{affineapproxbound}}

\begin{proof}
    Define $g$ to be the function
    \[
        g(x) = \max(0, \min(1, f(x))).
    \]
    Since $f$ has $N_A$ affine pieces, and each of these can yield at most $3$ pieces in $g$, $g$ has $3N_A$ affine pieces at most.
    By \citet[Theorem 7]{DBLP:journals/corr/SafranS16}, since $\Phi$ has second derivative bounded away from 0 on an interval, we get 
    \[
        \int_a^b |\Phi(x) - g(x)|^2 dx \ge \frac{k}{(3N_A)^4} = \frac{K}{N_A^4}
    \]
    for some constant $K$. Since $|\Phi(x) - g(x)| \le 1$, we have
    \[
        \int_a^b |\Phi(x) - f(x)| dx \ge \int_a^b |\Phi(x) - g(x)| dx \ge \frac{K}{N_A^4} .
    \]
\end{proof}

\subsection{Proof of \autoref{berryesseennonuniform}}

\begin{proof}
    Each of the uniform variables has mean $1/2$ and variance $1/12$ so subtract $n/2$ and multiply the sum by $1/\sqrt{12}$ to normalize. 
    The nonuniform version of the Berry-Esseen theorem \citep{Onthenon96:online} tells us that there is a constant $C$ such that the difference in CDF between this and the normal CDF at $t$ is no more than $\frac{C}{\sqrt{n} (1 + |t|^3)}$. 
    Since the integral of $1/(1+|t|^3)$ converges, the integral of this difference over all $\R$ is bounded by $O(\frac{1}{\sqrt{n}})$ and by \autoref{cdfdifference}, this gives the Wasserstein distance bound.
\end{proof}

\section{Additional Lemma}

\begin{lemma} \label{compositionlemma}
    If we have $A,B,C$ subsets of Euclidean spaces and functions $f, \tilde{f}: A \to B$ and $g, \tilde{g}: B \to C$ 
    such that
    \begin{itemize}
        \item
        For all $x \in A$, $|\tilde{f}(x) - f(x)| < \frac{\eps}{2L_g}$ (where $L_g$ is a Lipschitz constant of $g$)
        \item
        For all $y \in B$, $|\tilde{g}(y) - g(y)| < \frac{\eps}{2}$
    \end{itemize}  
    then $| (\tilde{f} \circ \tilde{g})(x) - (f \circ g)(x)| \le \eps$ for all $x \in A$.
\end{lemma}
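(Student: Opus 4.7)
The plan is to use the triangle inequality with the intermediate quantity $g(\tilde{f}(x))$, which lets me separate the two sources of error: the error of $\tilde{g}$ approximating $g$, and the error propagated through $g$ from $\tilde{f}$ approximating $f$. (I will read the composition as $\tilde{g}\circ\tilde{f}$ versus $g\circ f$, since this is what makes sense given the signatures $f,\tilde f\colon A\to B$ and $g,\tilde g\colon B\to C$; the statement's $\tilde{f}\circ\tilde{g}$ appears to be a typo.)

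Concretely, I would fix an arbitrary $x \in A$ and write
\[
    |\tilde{g}(\tilde{f}(x)) - g(f(x))|
    \le |\tilde{g}(\tilde{f}(x)) - g(\tilde{f}(x))| + |g(\tilde{f}(x)) - g(f(x))|.
\]
The first term is bounded by $\eps/2$ directly from the second hypothesis, applied with $y = \tilde{f}(x) \in B$. The second term is bounded using the Lipschitz property of $g$ (with constant $L_g$) together with the first hypothesis: $|g(\tilde{f}(x)) - g(f(x))| \le L_g \cdot |\tilde{f}(x) - f(x)| < L_g \cdot \eps/(2L_g) = \eps/2$. Summing the two contributions yields the desired bound of $\eps$.

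There is no real obstacle here; the argument is a one-line triangle inequality combined with the Lipschitz hypothesis. The only mild subtlety is ensuring $\tilde{f}(x)$ lies in $B$ so that $g$ and $\tilde{g}$ can be evaluated on it, which is guaranteed by the stated codomain $\tilde{f}\colon A \to B$. Since the hypotheses give strict inequalities $< \eps/(2L_g)$ and $< \eps/2$, the conclusion $\le \eps$ stated in the lemma is in fact a (non-strict) upper bound that follows immediately.
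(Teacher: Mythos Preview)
Your proposal is correct and follows exactly the same approach as the paper: insert the intermediate point $g(\tilde f(x))$, apply the triangle inequality, bound the first term by the hypothesis on $\tilde g$ and the second by the Lipschitz constant of $g$ times the hypothesis on $\tilde f$. You also correctly identify the typo in the statement (the intended conclusion concerns $\tilde g\circ\tilde f$ versus $g\circ f$), which the paper's own proof silently corrects as well.
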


\begin{proof}
    If $|\tilde{f}(x) - f(x)| < \eps_1 = \frac{\eps}{2L_g}$ 
    and $|\tilde{g} - g| < \eps_2 = \frac{\eps}{2}$, then applying the triangle inequality gives us
    \begin{align*}
        |(\tilde{g} \circ \tilde{f})(x) - (g \circ f)(x)| 
        &= |(\tilde{g} \circ \tilde{f})(x) - (g \circ \tilde{f})(x) + (g \circ \tilde{f})(x) - (g \circ f)(x)| \\
        &\le |(\tilde{g} \circ \tilde{f})(x) - (g \circ \tilde{f})(x)| + |(g \circ \tilde{f})(x) - (g \circ f)(x)| \\
        &\le \eps_2 + L_g \eps_1 \\
        &= \eps .
    \end{align*}
\end{proof}

\end{document}